\frenchspacing  \setlength{\pdfpagewidth}{8.5in}  \setlength{\pdfpageheight}{11in}  \usepackage{algorithm}
\def\eqref#1{equation~\ref{#1}}
\def\1{\bm{1}}
\DeclareMathAlphabet{\mathsfit}{\encodingdefault}{\sfdefault}{m}{sl}
\SetMathAlphabet{\mathsfit}{bold}{\encodingdefault}{\sfdefault}{bx}{n}
\theoremstyle{plain}\newtheorem{theorem}{Theorem}\newtheorem{proposition}[theorem]{Proposition}
\theoremstyle{remark}
\theoremstyle{definition}
\let\titleold\title
\renewcommand{\title}[1]{\titleold{#1}\newcommand{\thetitle}{#1}}
\title{From Parameter to Representation: A Closed-Form Approach for\\Controllable Model Merging}
\author {
Jialin Wu\textsuperscript{\rm 1},
    Jian Yang\textsuperscript{\rm 2},
    Handing Wang\textsuperscript{\rm 3},
    Jiajun Wen\textsuperscript{\rm 1},
    Zhiyong Yu\textsuperscript{\rm 1}\thanks{Corresponding author.}
}
\begin{document}

\maketitle

\begin{abstract}

Model merging combines expert models for multitask performance but faces challenges from parameter interference. This has sparked recent interest in controllable model merging, giving users the ability to explicitly balance performance trade-offs. Existing approaches employ a compile-then-query paradigm, performing a costly offline multi-objective optimization to enable fast, preference-aware model generation. This offline stage typically involves iterative search or dedicated training, with complexity that grows exponentially with the number of tasks. To overcome these limitations, we shift the perspective from parameter-space optimization to a direct correction of the model's final representation. Our approach models this correction as an optimal linear transformation, yielding a closed-form solution that replaces the entire offline optimization process with a single-step, architecture-agnostic computation. This solution directly incorporates user preferences, allowing a Pareto-optimal model to be generated on-the-fly with complexity that scales linearly with the number of tasks. Experimental results show our method generates a superior Pareto front with more precise preference alignment and drastically reduced computational cost.

\end{abstract}

\section{Introduction}

The shift towards fine-tuning large pre-trained models has led to a surge in specialized models customized for specific tasks. Merging these models has become an effective way to combine their strengths into a single, versatile network without needing expensive retraining or access to the original training data. Early approaches, like  averaging weights~\citep{DBLP:conf/nips/MatenaR22, DBLP:conf/iclr/Jin0P023} or performing task vector arithmetic~\citep{DBLP:conf/iclr/IlharcoRWSHF23, DBLP:conf/nips/YadavTCRB23, DBLP:conf/iclr/YangW00G0T24}, aimed to produce a static, merged model. However, these \textit{one-size-fits-all} methods often suffer from interference between task-specific parameters, leaving its performance far behind that of the individual experts. This revealed a need for a more flexible and controllable approach to model merging, letting users explicitly adjust the balance based on their preferences.

\begin{figure}[tb]
    \centering
    \clipbox{0.2cm 0.2cm 0.2cm 0.2cm}{\includegraphics[width=0.5\textwidth]{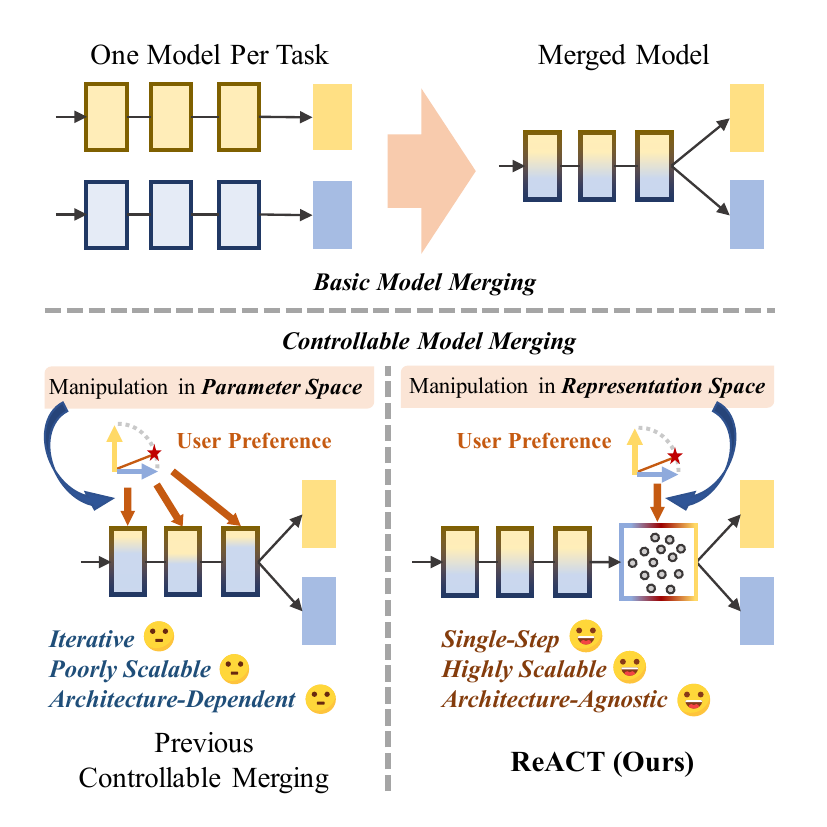}}
    \caption{Conceptual distinction between basic model merging (top) and controllable merging (bottom), exemplified with a two-task scenario. While prior approaches to controllable merging (bottom left) rely on slow, iterative optimization in parameter space, our method (bottom right) achieves direct control through an efficient, single-step correction in representation space.}
    \label{fig:concept}
\end{figure}

Recent methods like Pareto Merging (PM)~\citep{chenParetoMerging2025} and MAP~\citep{liMAPLowcomputeModel2024} enable user control by framing model merging as a multi-objective optimization (MOO) problem. Their shared \textit{compile-then-query} paradigm, however, imposes severe overhead in computation, memory, and data. The offline \textit{compiling} stage is computationally prohibitive: PM requires complex iterative training that can become unstable, while MAP employs an evolutionary search whose complexity grows exponentially with more tasks. This cost is further magnified when combining PM with advanced backbones like AdaMerging~\citep{DBLP:conf/iclr/YangW00G0T24}, which necessitates a more complex joint optimization. In terms of resources, MAP must store all original task vectors, a memory requirement that scales poorly, and often relies on labeled data which may be unavailable. Moreover, this entire costly preparation of both MAP and PM must be repeated from scratch whenever the set of expert models changes, hindering practical scalability and adaptation.

To address these concerns, we propose \textbf{ReACT} (\textbf{Re}presentation \textbf{A}nalytical \textbf{C}ontrol \textbf{T}ransformation), a fundamentally different, \textit{on-the-fly} analytical approach.
As conceptually illustrated in Figure~\ref{fig:concept}, ReACT shift the paradigm from costly parameter-space optimization to a direct correction in the model's final representation space, based on our key insight that performance degradation stems from a global linear distortion. We therefore reframe the problem as finding an optimal linear correction map, regularized by an orthogonal prior. Crucially, this formulation admits a closed-form solution that naturally extends to the multi-objective case via linear scalarization, bypassing iterative optimization. The resulting analytical framework is inherently architecture-agnostic and highly modular, allowing expert models to be added or removed without costly re-optimization. ReACT enables on-the-fly generation of preference-aware models for real-time exploration and is highly data-efficient, achieving near-best performance with only a fraction of unlabeled test data. Our main contributions are threefold:
\begin{itemize}
    \item We reframe controllable merging as a representation correction problem, identifying the primary bottleneck as a simple linear distortion rather than a complex parameter conflict.
    \item We introduce an \textit{on-the-fly} analytical method, deriving what is, to our knowledge, the first \textit{closed-form solution} for controllable model merging that bypasses iterative optimization entirely.
    \item Extensive experiments show our method achieves a state-of-the-art Pareto front, superior preference alignment, and drastically reduced computational cost, maintaining this strong performance even when using only a fraction of the unlabeled data required by competing methods.
\end{itemize}

\section{Related Work}

\paragraph{Model Merging.} 
Model merging aims to consolidate multiple task-specific models, fine-tuned from a common pre-trained base, into a single network. We categorize prior work based on the nature of the merged model they produce.
A primary line of work focuses on creating a single, static merged model. This includes simple weight averaging~\citep{DBLP:conf/nips/SinghJ20, DBLP:conf/nips/MatenaR22, DBLP:conf/iclr/Jin0P023} and more sophisticated methods operating on task vectors—the parameter shifts from pre-training~\citep{DBLP:conf/iclr/IlharcoRWSHF23}. Subsequent refinements have focused on mitigating parameter conflicts through techniques like pruning and rescaling~\citep{DBLP:conf/nips/YadavTCRB23, DBLP:conf/iclr/YangW00G0T24, DBLP:conf/icml/Yu0Y0L24, DBLP:conf/nips/0002LLJGYLGTH024}. However, these methods produce a \textit{one-size-fits-all} model, offering no mechanism to control performance trade-offs among tasks.
Another category introduces dynamic, task-specific modules to improve performance. For instance, Representation Surgery~\citep{yangRepresentationSurgeryMultiTask2024} learns a lightweight MLP to correct representation bias for each task, while other methods generate task-specific masks or adapters~\citep{DBLP:conf/icml/WangDOFF24, DBLP:conf/nips/HuangY000O24, DBLP:conf/nips/LuF0QC024, qiLessMoreEfficient2025}. While effective, these approaches typically require loading a single task's module during inference, preventing a smooth, controllable trade-off across multiple tasks simultaneously.

The works most relevant to ours are those that enable controllable merging, often framing it as a Multi-Objective Optimization (MOO) problem. Pareto Merging (PM)~\citep{chenParetoMerging2025} learns a low-rank representation of the Pareto set via a complex optimization, while MAP~\citep{liMAPLowcomputeModel2024} employs an evolutionary algorithm to search for solutions. Both pioneering methods follow a \textit{compile-then-query} paradigm that operates in the parameter space. Despite their innovation, this paradigm imposes a significant upfront computational cost and suffers from poor scalability as the number of tasks increases. In contrast, our work proposes an \textit{on-the-fly} analytical framework that operates in the representation space.

\paragraph{Multi-Objective Optimization in Deep Learning.}
Multi-Objective Optimization (MOO) is a foundational framework for balancing competing objectives, not only in model merging but also in the broader field of multi-task learning (MTL). In the context of MTL, a dominant line of work focuses on training-based methods that operate during training or train a preference-aware module directly, such as GradNorm~\citep{DBLP:conf/icml/ChenBLR18}, PCGrad~\citep{DBLP:conf/nips/YuK0LHF20} and PaLoRA~\citep{DBLP:conf/iclr/DimitriadisFF25}. These approaches iteratively manipulate task gradients or sample preference vectors at each training step to steer the model towards the Pareto front.

However, a significant drawback of these training-time methods is their computational expense and reliance on the full training pipeline. Our work, in contrast, operates in a distinct, post-hoc paradigm. We apply MOO principles to already-trained models, sidestepping the costly iterative training process entirely. Furthermore, by formulating the problem in the representation space, we derive a closed-form solution, eliminating even the post-hoc optimization or search required by other controllable merging methods. 

\begin{figure*}[ht]
    \centering
    \includegraphics[width=\textwidth]{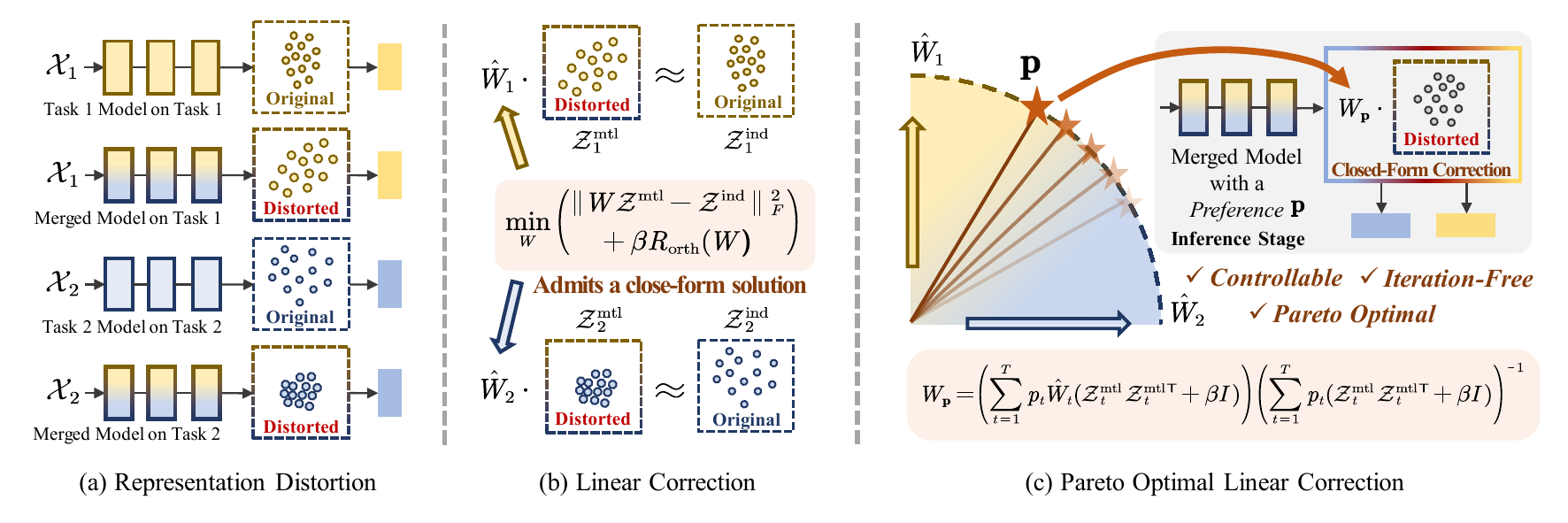}
    \caption{An overview of our proposed method,  illustrated with a two-task~($T=2$) example.  (a) We first identify that model merging causes representation distortion: the feature distribution of a merged model deviates from that of an individual task model. (b) We propose to correct this by finding a linear correction matrix $\hat{W}_t$ for each task, which has a closed-form solution. (c) Finally, we derive a Pareto-optimal transformation $W_\mathbf{p}$ by analytically aggregating the individual corrections based on a user preference vector $\mathbf{p}$. }

    \label{fig:overview}
\end{figure*}

\section{Preliminaries}
\subsection{Model Merging and Task Vectors}
Model merging techniques typically operate on a set of $T$ task-specific models, $\{\theta_1, \theta_2, ..., \theta_T\}$, that are fine-tuned from a common pre-trained model, $\theta_0$. A powerful concept in this area is the \textit{task vector}~\citep{DBLP:conf/iclr/IlharcoRWSHF23}, which captures the parameter shift for a specific task:
\begin{equation}
    \Delta_t = \theta_t - \theta_0
\end{equation}
A simple merged model, $\theta_{\text{merge}}$, can then be constructed by applying a linear combination of these task vectors to the pre-trained base:
\begin{equation}
    \theta_{\text{merge}} = \theta_0 + \sum_{t=1}^{T} \alpha_t \Delta_t
    \label{eq:task_vector}
\end{equation}
where $\alpha_t$ are merging coefficients. More advanced methods determine these coefficients $\alpha_t$ sophistically, sometimes even optimizing them layer-wise or per-parameter. The pre-merged model $\theta_{\text{merge}}$ can be obtained via various methods, from the simple linear combination in Eq.~\ref{eq:task_vector} to more advanced techniques like TIES-Merging or AdaMerging. Our framework operates as a post-hoc correction on any such pre-merged model.

\subsection{Controllable Merging as MOO}
The goal of controllable merging is to find solutions that optimally trade off performance across multiple tasks. This can be naturally formulated as a Multi-Objective Optimization (MOO) problem. Given $T$ tasks, we aim to simultaneously minimize a vector of loss functions, $\{L_1(\theta), L_2(\theta), ..., L_T(\theta)\}$, where $L_t(\theta)$ is the loss for task $t$ using model parameters $\theta$.

A solution $\theta^*$ is considered \textit{Pareto-optimal} if it is not possible to improve performance on one task without degrading performance on at least one other task. The set of all such Pareto-optimal solutions forms the \textit{Pareto front}~\citep{fleischer2003measure}.

A common technique to find points on the Pareto front is \textit{Linear Scalarization}. This method transforms the multi-objective problem into a single-objective one by taking a weighted sum of the individual objectives, guided by a user-defined preference vector $\mathbf{p} = [p_1, ..., p_T]^\mathsf{T}$, where $p_t \ge 0$ and $\sum p_t = 1$:
\begin{equation}
    \min_{\theta} \sum_{t=1}^{T} p_t L_t(\theta)
\end{equation}
Crucially, existing methods attempt to solve this MOO problem directly in the high-dimensional parameter space of $\theta$. In contrast, our work will reformulate the objective in the model's final representation space, leading to a more direct and efficient solution.

\section{Method}
Instead of performing expensive optimization in the parameter space, our method, ReACT, directly corrects the model's final representation. For a given task $t$, our approach is founded on the hypothesis that the discrepancy between the representations from the merged model, $\mathcal{Z}^\text{mtl}_t$, and those from the ideal single-task expert, $\mathcal{Z}^\text{ind}_t$, is primarily a linear distortion. This allows us to model the correction as a simple linear transformation. We show that finding the optimal transformation is a multi-objective problem for which we can derive a closed-form, Pareto-optimal solution for any user preference. Figure~\ref{fig:overview} provides an overview of our framework. The following sections first formalize this correction for a single task and then extend it to the multi-objective setting.

\begin{figure*}[ht]
    \centering
    \begin{subfigure}[t]{0.24\textwidth}
    \centering
    \includegraphics[width=\textwidth]{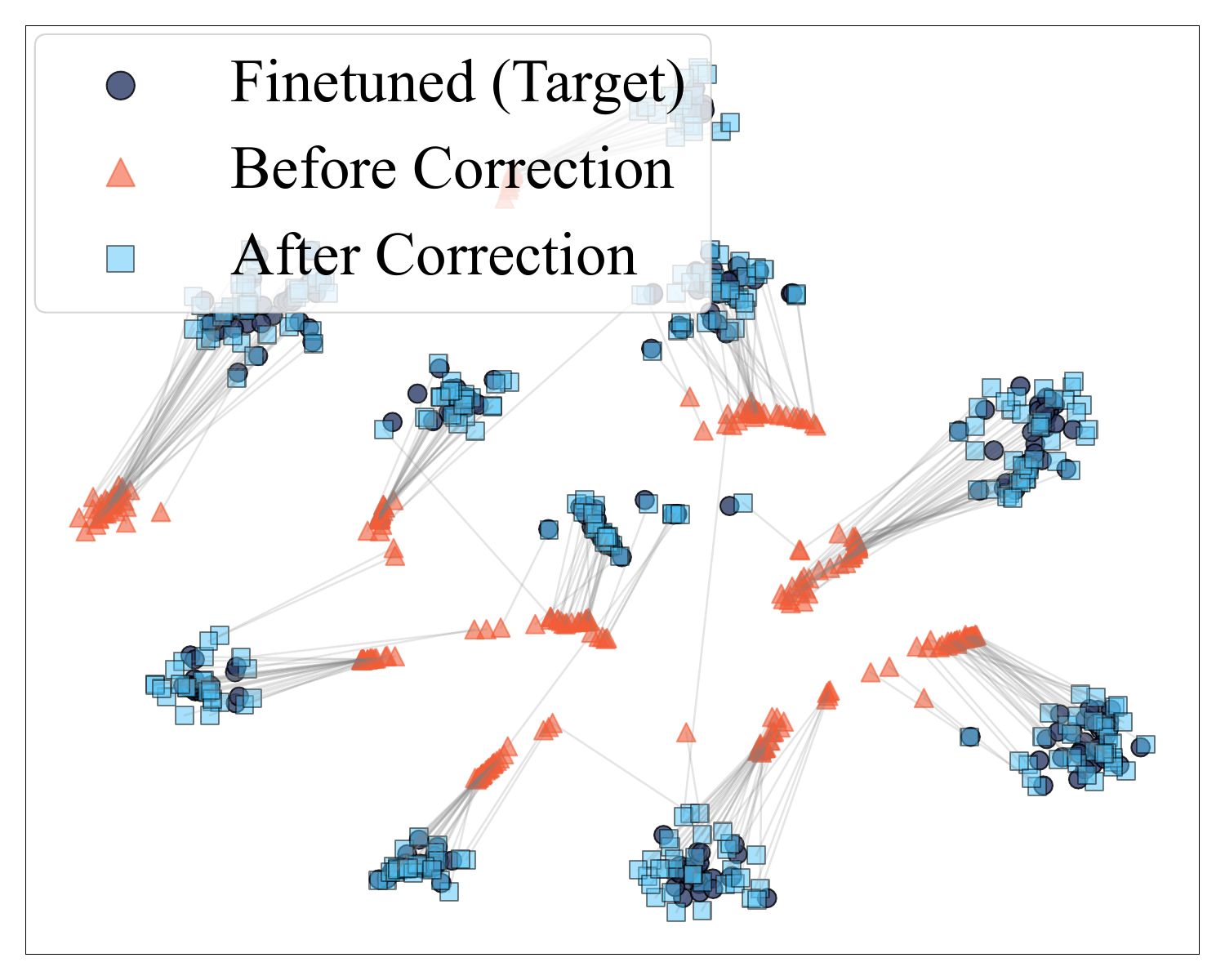}
    \caption{MNIST}
    \hfill
    \end{subfigure}
    \begin{subfigure}[t]{0.24\textwidth}
        \centering
        \includegraphics[width=\textwidth]{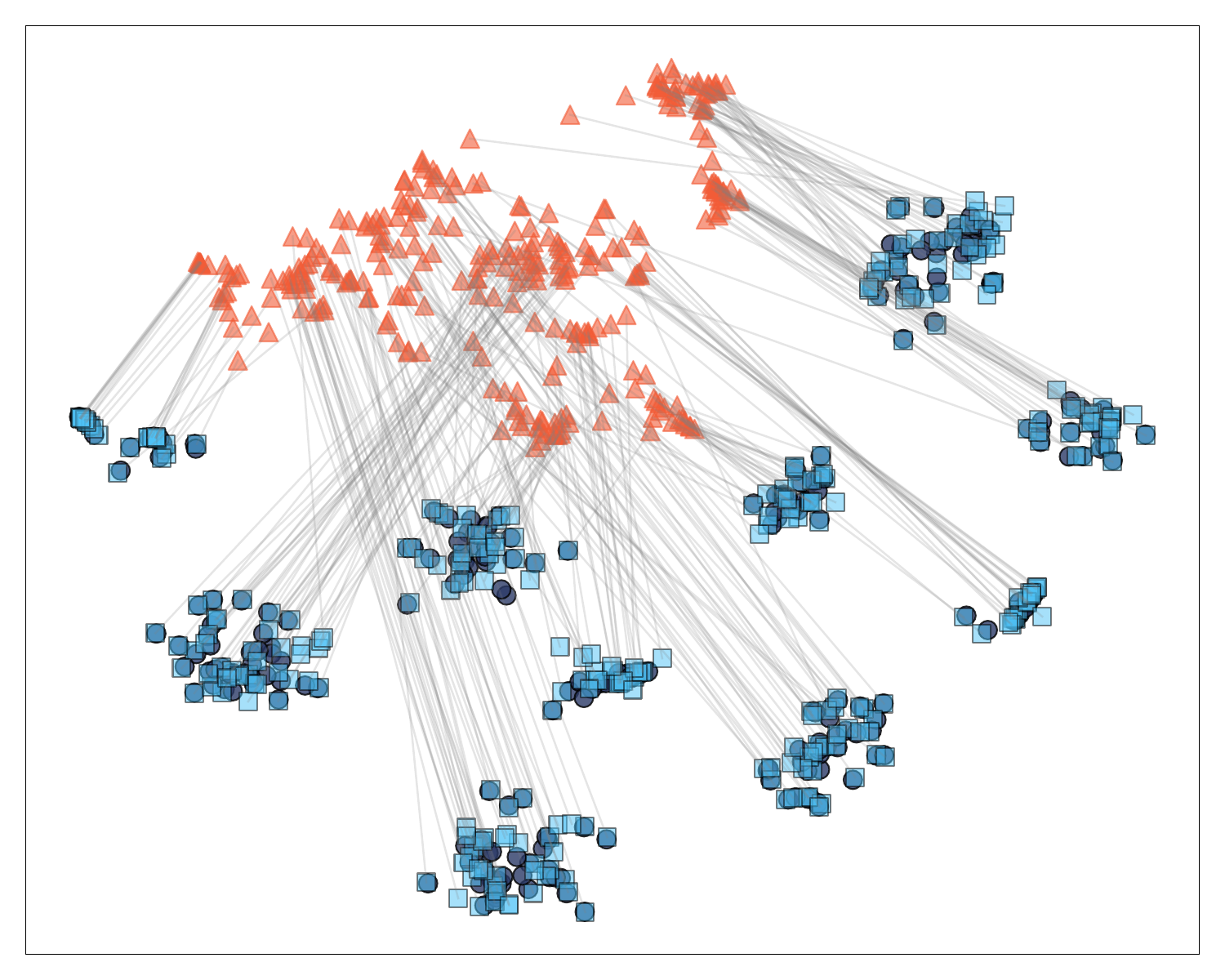}
        \caption{EuroSAT}
    \end{subfigure} 
    \hfill
    \begin{subfigure}[t]{0.24\textwidth}
        \centering
        \includegraphics[width=\textwidth]{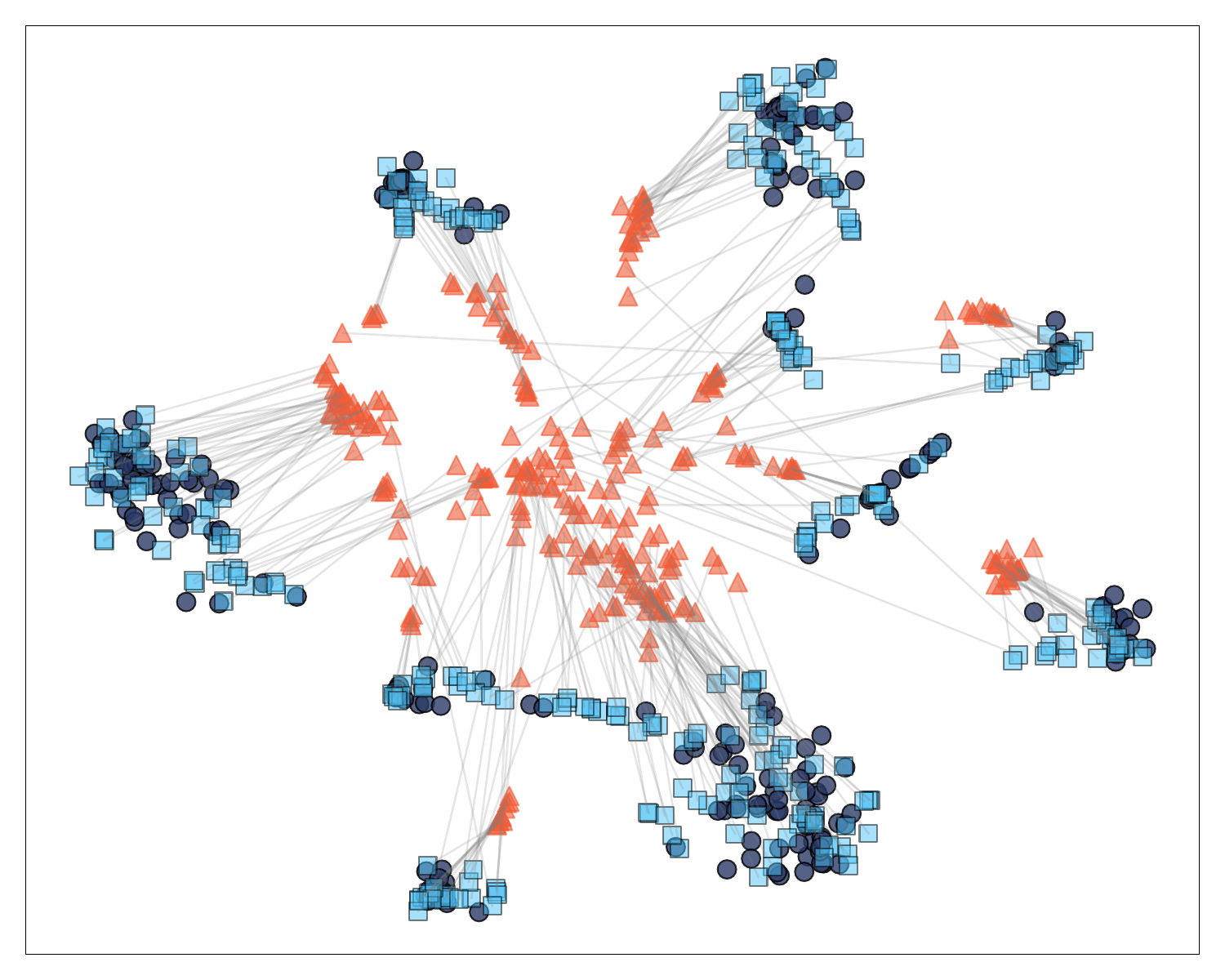}
        \caption{SVHN}
    \end{subfigure}
    \hfill
    \begin{subfigure}[t]{0.24\textwidth}
        \centering
        \includegraphics[width=\textwidth]{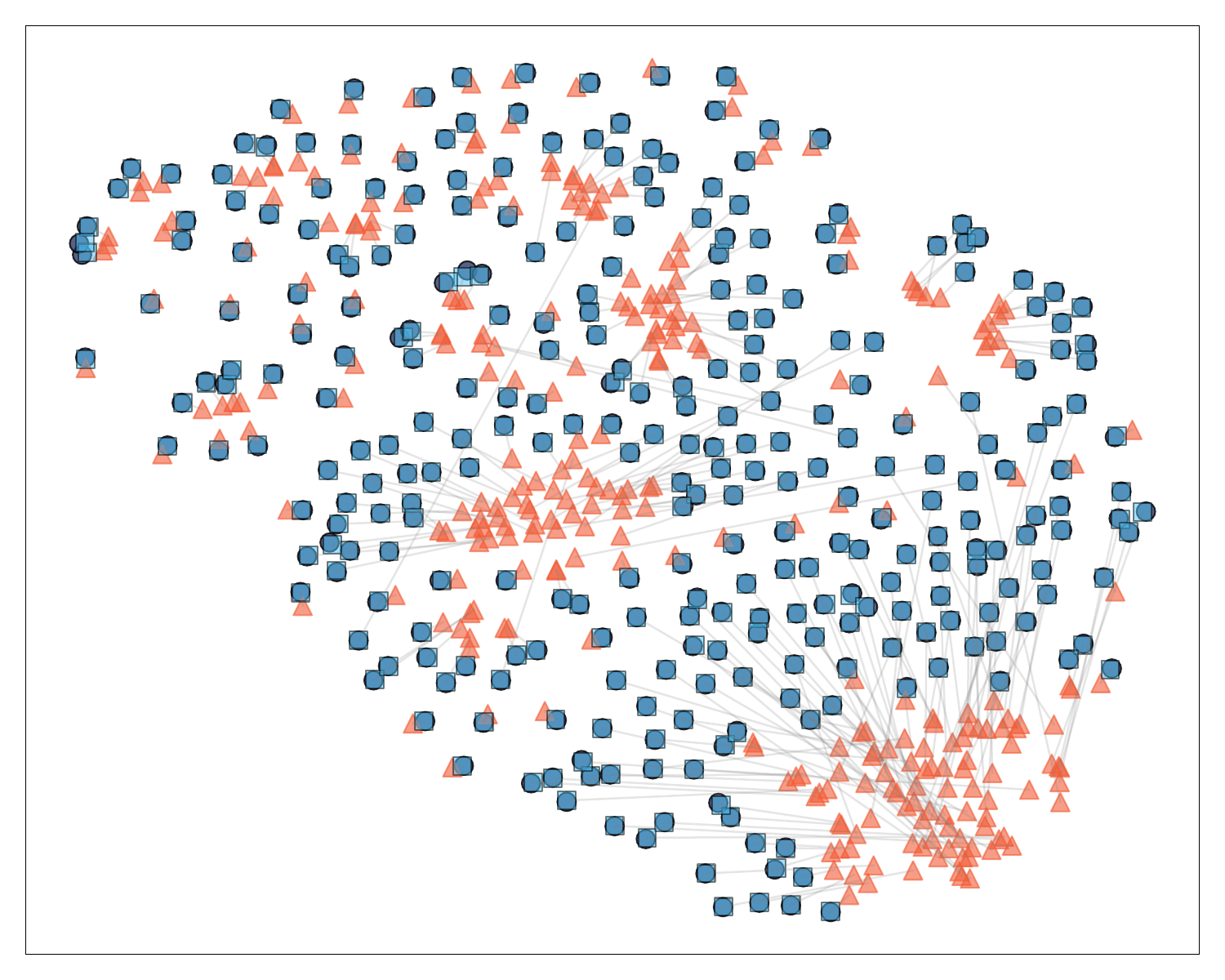}
        \caption{Cars}
    \end{subfigure}
    \caption{Visualization of representation correction. t-SNE plots for four tasks from an 8-model merge show the merged model's representations (orange) are severely misaligned with ideal single-task targets (dark blue). Our method effectively pulls the corrected features (light blue) back to the target clusters, visually confirming that a simple linear transformation is sufficient to correct the discrepancy. More t-SNE plots see Appendix D.5.}
    \label{fig:visual_correction}
\end{figure*}

\subsection{Linear Representation Correction}
Prior work like Representation Surgery~\citep{yangRepresentationSurgeryMultiTask2024} employs sample-specific non-linear functions for this correction. However, as hypothesized and visualized in Figure~\ref{fig:visual_correction}, this misalignment between $\mathcal{Z}^\text{mtl}_t$ and $\mathcal{Z}^\text{ind}_t$ is structurally simple. We therefore hypothesize it is dominated by a global linear distortion (e.g., rotation and scaling) rather than a sample-specific non-linear warp, as shown in Figure~\ref{fig:visual_correction}.
Based on this key insight, we propose a simpler, more elegant correction. We model the correction as a linear transformation matrix $W_t$ for each task $t$:
\begin{equation}
\mathcal{\hat{Z}}^\text{mtl}_{t} = W_t  \mathcal{Z}^\text{mtl}_{t}
\label{eq:correction}
\end{equation}
To find the optimal transformation matrix $W_t \in \mathbb{R}^{D_{\text{rep}} \times D_{\text{rep}}}$, we use a set of $N$ calibration samples. We collect their representations into matrices $\mathcal{Z}^\text{mtl}_{t}, \mathcal{Z}^\text{ind}_{t} \in \mathbb{R}^{D_{\text{rep}} \times N}$ and minimize the squared Frobenius norm between the corrected and ideal representations:
\begin{equation}
\min_{W_t} \|W_t \mathcal{Z}^\text{mtl}_{t} - \mathcal{Z}^\text{ind}_{t}\|_F^2
\label{eq:objective_simple}
\end{equation}

\subsection{Optimal Orthogonal Regularization}
Directly solving Eq.~\ref{eq:objective_simple} via ordinary least squares can overfit when the correction data, available at test time, is scarce. To improve robustness, we regularize the solution towards the optimal orthogonal transformation $W_t^{\text{orth}}$, which is solvable via the Orthogonal Procrustes problem (by finding the SVD of the cross-covariance matrix $\mathcal{Z}^\text{ind}_t {\mathcal{Z}^\text{mtl}_t}^\mathsf{T} = U_t S_t V_t^\mathsf{T}$ and setting $W_t^{\text{orth}} = U_t  V_t^\mathsf{T}$).
This structural prior prevents $W_t$ from distorting the geometric structure of the representation space. The regularized objective is:
\begin{equation}
\min_{W_t} \|W_t  \mathcal{Z}^\text{mtl}_{t} - \mathcal{Z}^\text{ind}_{t}\|_F^2 + \beta \|W_t - W_t^{\text{orth}}\|_F^2
\label{eq:orth_reg}
\end{equation}
where $\beta$ is a hyperparameter. This objective has a closed-form solution (see Appendix B.1 for derivation):
\begin{equation}
\hat{W}_t = (\mathcal{Z}^\text{ind}_{t} {\mathcal{Z}^\text{mtl}_{t}}^\mathsf{T} + \beta W_t^{\text{orth}})(\mathcal{Z}^\text{mtl}_{t} {\mathcal{Z}^\text{mtl}_{t}}^\mathsf{T} + \beta I)^{-1}
\label{eq:single_close_form}
\end{equation}
This provides the optimal correction for a single task.

\subsection{Pareto-Optimal Representation Correction}
To handle multiple tasks and user preferences $\mathbf{p}=[p_1,\cdots,p_T]^\mathsf{T}$, we extend the single-task objective (Eq. \ref{eq:orth_reg}) to a multi-objective optimization (MOO) problem:
\begin{equation}
\min_{W} \{L_1(W), \cdots, L_T(W)\},
\label{eq:moo}
\end{equation}
where each $L_t(W)$ is the loss from Eq. \ref{eq:orth_reg}.
We apply Linear Scalarization to find Pareto-optimal solutions, resulting in a single preference-weighted objective:
\begin{equation}
\min_{W} \sum_{t=1}^T p_t L_t(W)
\label{eq:scalarization}
\end{equation}
Crucially, since each loss function $L_t(W)$ is a convex quadratic function of $W$, the weighted objective in Eq.~\ref{eq:scalarization} also possesses this property. This allows us to derive a unique, analytical solution, which we formalize in the following proposition (proof in Appendix B.2):

\begin{proposition}
\label{prop:pareto_optimal_trans}
For any preference $\mathbf{p}$, the unique Pareto-optimal solution $W_\mathbf{p}$ to the multi-objective problem in Eq.~\ref{eq:moo} has a closed-form expression given by:
\begin{equation}
\begin{split}
W_\mathbf{p} = & \left( \sum_{t=1}^{T} p_t (\mathcal{Z}^\text{ind}_t {\mathcal{Z}^\text{mtl}_t}^\mathsf{T} + \beta W^\text{orth}_t) \right) \\ & \left( \sum_{t=1}^{T} p_t (\mathcal{Z}^\text{mtl}_t {\mathcal{Z}^\text{mtl}_t}^\mathsf{T} + \beta I) \right)^{-1}
\end{split}
\label{eq:pareto_cf}
\end{equation}
\end{proposition}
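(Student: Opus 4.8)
The plan is to exploit the fact, already noted before the statement, that each $L_t(W)$ is a convex quadratic in $W$, so the positively-weighted sum $J(W) := \sum_{t=1}^{T} p_t L_t(W)$ in Eq.~\ref{eq:scalarization} is itself a convex quadratic. Since the regularizer contributes a term $\beta\|W - W_t^{\text{orth}}\|_F^2$ with $\beta > 0$, the quadratic form governing $J$ will turn out to be strictly positive definite, so $J$ has a \emph{unique} global minimizer characterized completely by the first-order stationarity condition $\nabla_W J(W) = 0$. This reduces the proposition to solving a single linear matrix equation.

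First I would differentiate $J$ term by term using the standard Frobenius identities $\nabla_W \|W A - B\|_F^2 = 2(WA - B)A^\mathsf{T}$ and $\nabla_W \|W - C\|_F^2 = 2(W - C)$. Applying these to each summand of Eq.~\ref{eq:orth_reg} and setting the result to zero gives, after dividing by $2$,
\[
\sum_{t=1}^{T} p_t\Bigl[(W\mathcal{Z}^\text{mtl}_t - \mathcal{Z}^\text{ind}_t){\mathcal{Z}^\text{mtl}_t}^\mathsf{T} + \beta(W - W_t^{\text{orth}})\Bigr] = 0 .
\]
Because $W$ enters every term as a common left factor, I would then collect the $W$-terms to obtain the normal equation $W\,M_\mathbf{p} = \sum_{t} p_t(\mathcal{Z}^\text{ind}_t{\mathcal{Z}^\text{mtl}_t}^\mathsf{T} + \beta W_t^{\text{orth}})$, where $M_\mathbf{p} := \sum_{t} p_t(\mathcal{Z}^\text{mtl}_t{\mathcal{Z}^\text{mtl}_t}^\mathsf{T} + \beta I)$. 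This already has exactly the shape of Eq.~\ref{eq:pareto_cf}, pending the inversion of $M_\mathbf{p}$, and it collapses to the single-task form Eq.~\ref{eq:single_close_form} when $T=1$.

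The one step that needs genuine care is the invertibility of $M_\mathbf{p}$. Using the constraint $\sum_{t} p_t = 1$, I would rewrite $M_\mathbf{p} = \bigl(\sum_{t} p_t\,\mathcal{Z}^\text{mtl}_t{\mathcal{Z}^\text{mtl}_t}^\mathsf{T}\bigr) + \beta I$; the first summand is a nonnegative combination of Gram matrices and is therefore positive semidefinite, so $M_\mathbf{p} \succeq \beta I \succ 0$ is positive definite and invertible regardless of how the data matrices are distributed or whether some $p_t$ vanish. Right-multiplying the normal equation by $M_\mathbf{p}^{-1}$ then delivers the claimed closed form in Eq.~\ref{eq:pareto_cf}, and this same positive-definiteness is exactly what makes $J$ strictly convex, pinning down uniqueness of the minimizer.

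It remains to argue that this unique minimizer is Pareto-optimal, which I would handle by the standard linear-scalarization argument, sharpened to account for possibly-zero weights. Suppose some $W' \neq W_\mathbf{p}$ Pareto-dominated $W_\mathbf{p}$, that is, $L_t(W') \le L_t(W_\mathbf{p})$ for all $t$ with strict inequality for at least one task; summing against the weights $p_t \ge 0$ yields $J(W') \le J(W_\mathbf{p})$. A strict inequality here contradicts minimality of $W_\mathbf{p}$, while equality would make $W'$ a second global minimizer, contradicting the uniqueness established above. Hence $W_\mathbf{p}$ is Pareto-optimal. The main obstacle is therefore not the derivation, which is mechanical matrix calculus, but the bookkeeping that keeps everything airtight when some $p_t = 0$: the constraint $\sum_t p_t = 1$ ensures the $\beta I$ contribution to $M_\mathbf{p}$ survives (so $M_\mathbf{p}$ stays invertible and the minimizer stays unique), and that uniqueness is in turn precisely what upgrades weak to strict Pareto-optimality across the entire preference simplex.
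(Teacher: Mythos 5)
Your proof is correct and follows the same core route as the paper's: linearly scalarize, invoke convexity of each quadratic $L_t(W)$, characterize the minimizer by the zero-gradient condition, and solve the resulting linear matrix equation to obtain Eq.~\ref{eq:pareto_cf}. Where you genuinely improve on the paper is in two places it leaves loose. First, you justify the matrix inversion: writing $M_\mathbf{p} = \sum_t p_t\,\mathcal{Z}^\text{mtl}_t{\mathcal{Z}^\text{mtl}_t}^\mathsf{T} + \beta I \succeq \beta I \succ 0$ (using $\sum_t p_t = 1$ and $\beta>0$) establishes both invertibility and strict convexity, whereas the paper simply writes down the inverse without comment. Second, and more substantively, your treatment of zero preference weights is rigorous where the paper's is not: the paper appeals to the standard scalarization theorem for strictly positive weights and then asserts that with some $p_t=0$ the solution "is Pareto-optimal under mild additional conditions that hold in our case" without stating them; your argument makes this precise by observing that a dominating $W'$ would satisfy $J(W')\le J(W_\mathbf{p})$, and either inequality contradicts minimality or equality contradicts the uniqueness of the minimizer guaranteed by strict convexity. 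This is exactly the standard "unique scalarization minimizer $\Rightarrow$ Pareto-optimal" result, and it is the right way to close the gap across the whole simplex. The only caveat worth flagging is that your entire argument (like the proposition's uniqueness claim itself) requires $\beta>0$; the paper's appendix nominally allows $\beta\ge 0$, under which both uniqueness and invertibility can fail, so your implicit strengthening of the hypothesis is a feature, not a bug.
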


The solution $W_\mathbf{p}$ is the core of our framework, enabling instant generation of a preference-aware model. Moreover, its structure reveals a principled aggregation mechanism. By substituting the single-task optimal solution $\hat{W}_t$ from Eq.~\ref{eq:single_close_form} back into Eq.~\ref{eq:pareto_cf}, we can algebraically rearrange $W_\mathbf{p}$ into the following intuitive form:
\begin{equation}
W_\mathbf{p} = \left( \sum_{t=1}^{T} p_t \hat{W}_t C_t \right) \left( \sum_{t=1}^{T} p_t C_t \right)^{-1}
\label{eq:modular_pareto}
\end{equation}
where $C_t = \mathcal{Z}^\text{mtl}_t {\mathcal{Z}^\text{mtl}_t}^\mathsf{T} + \beta I$. This form clearly shows that our solution is a weighted average of the individual correction maps $\hat{W}_t$, where each map is weighted not just by the user preference $p_t$, but by a data-dependent matrix $C_t$. This is fundamentally different from a naive average, $W_{\text{naive}} = \sum p_t \hat{W}_t$, to which our solution only simplifies if all $C_t$ are identical. Since $C_t$ is the regularized autocorrelation matrix of the features, our method naturally gives greater influence to corrections for tasks with more pronounced, high-variance feature structures. We will validate the empirical superiority of this data-aware weighting in our ablation study. 
Pseudocode details our method's efficient implementation: Algorithm~\ref{alg:porc_offline} covers offline computation of reusable components, while Algorithm~\ref{alg:porc_online} outlines the online two-stage workflow for instant corrector $W_\mathbf{p}$ assembly and inference.

\subsection{Complexity and Scalability}
The analytical, non-iterative nature of our framework leads to its exceptional computational efficiency. The one-time setup for $T$ tasks is dominated by a highly parallelizable representation extraction, followed by matrix computations of $O(T D_{\text{rep}}^3)$ on features of dimension $D_{\text{rep}}$. This approach bypasses the prohibitive evaluation costs and large $O(Td)$ storage of MAP~\citep{liMAPLowcomputeModel2024}, where $d$ denotes the full model parameter count, as well as the extensive, less parallelizable iterative training of PM~\citep{chenParetoMerging2025}. Consequently, our method requires only $O(d + T D_{\text{rep}}^2)$ storage, while adapting to a new preference $\mathbf{p}$ is nearly instantaneous at $O(T D_{\text{rep}}^2 + D_{\text{rep}}^3)$. Since $D_{\text{rep}}$ is a small constant defined by the model architecture (e.g., 512 for ViT-B/32), this cubic complexity arises from a single matrix inversion that executes in \textit{milliseconds}. This offers a uniquely practical solution for real-time generation of preference-aligned models. A more detailed complexity analysis is provided in the Appendix C.

\begin{algorithm}[H]
\caption{One-Time Component Computation (Offline)}
\label{alg:porc_offline}
\textbf{Input}: Merged backbone $f_{\text{merge}}$; Task-specific models $\{(f_t, h_t)\}_{t=1}^T$; Calibration data $\{\mathcal{D}^{\text{calib}}_t\}_{t=1}^T$; Hyperparameter $\beta$. \\
\mbox{\textbf{Output}: Pre-computed components $\{\hat{W}_t, C_t\}_{t=1}^T$.}
\vspace{-\baselineskip}
\begin{algorithmic}[1]
\FOR{$t=1, \dots, T$}
    \STATE Extract representations: $\mathcal{Z}^\text{mtl}_{t} \leftarrow f_{\text{merge}}(\mathcal{D}^{\text{calib}}_t)$, $\mathcal{Z}^\text{ind}_{t} \leftarrow f_t(\mathcal{D}^{\text{calib}}_t)$;
    \STATE Compute cross-covariance: $S_t \leftarrow \mathcal{Z}^\text{ind}_t {\mathcal{Z}^\text{mtl}_t}^\mathsf{T}$;
    \STATE Solve Orthogonal Procrustes problem via SVD:
    \STATE \hspace{\algorithmicindent} $U_t, \_, V_t^\mathsf{T} \leftarrow \text{SVD}(S_t)$;
    \STATE \hspace{\algorithmicindent} $W_t^{\text{orth}} \leftarrow U_t V_t^\mathsf{T}$;
    \STATE Pre-compute components based on Eq.~\ref{eq:modular_pareto}:
    \STATE \hspace{\algorithmicindent} $C_t \leftarrow \mathcal{Z}^\text{mtl}_t {\mathcal{Z}^\text{mtl}_t}^\mathsf{T} + \beta I$;
    \STATE \hspace{\algorithmicindent} $\hat{W}_t \leftarrow (S_t + \beta W_t^{\text{orth}})C_t^{-1}$; 
\ENDFOR
\STATE \textbf{return} $\{\hat{W}_t, C_t\}_{t=1}^T$
\end{algorithmic}
\end{algorithm}

\begin{algorithm}[H]
\caption{Online Preference Adaptation and Inference}
\label{alg:porc_online}
\textbf{Requires}: Pre-computed components $\{\hat{W}_t, C_t\}_{t=1}^T$ from Alg.~\ref{alg:porc_offline}; Merged backbone $f_{\text{merge}}$; A data point $x$ for a target task $t$ with head $h_t$.
\begin{algorithmic}[1]
\STATE \textbf{\textit{\# Step 1: Assemble corrector for preference $\mathbf{p}$ (run once per preference)}} 
\STATE $W_\mathbf{p} \leftarrow \left( \sum_{i=1}^{T} p_i \hat{W}_i C_i \right) \left( \sum_{i=1}^{T} p_i C_i \right)^{-1}$;
\STATE \textbf{return} $W_\mathbf{p}$
\STATE \textbf{\textit{\# Step 2: Use assembled corrector $W_\mathbf{p}$ for inference (run per data point)}}
\STATE Extract representation: $z^{\text{mtl}} \leftarrow f_{\text{merge}}(x)$;
\STATE Apply correction: $\hat{z}^{\text{mtl}} \leftarrow W_\mathbf{p} z^{\text{mtl}}$;
\STATE Predict using task head: $y_t \leftarrow h_t(\hat{z}^{\text{mtl}})$;
\STATE \textbf{return} $y_t$
\end{algorithmic}
\end{algorithm}

\begin{table*}[t!]
\centering
\tabcolsep=0.24em
\begin{tabular*}{\textwidth}{@{\extracolsep{\fill}}c|l|*{8}{c}|c} \toprule
Pref. & Method & SUN397 & Cars & RESISC45 & EuroSAT & SVHN & GTSRB & MNIST & DTD & Average \\
\midrule
\multirow{2}{*}{--} & Individual & 75.3 & 77.7 & 96.1 & 99.7 & 97.5 & 98.7 & 99.7 & 79.4 & 90.5 \\
    & MTL & 73.9 & 74.4 & 93.9 & 98.2 & 95.8 & 98.9 & 99.5 & 77.9 & 88.9 \\
\midrule
\multirow{5}{*}{--} & TA~\citep{DBLP:conf/iclr/IlharcoRWSHF23} & 55.2 & 54.9 & 66.7 & 78.9 & 80.2 & 69.7 & 97.3 & 50.4 & 69.1 \\
    & TIES~\citep{DBLP:conf/nips/YadavTCRB23} & 59.5 & 60.0 & 71.7 & 78.2 & 86.3 & 72.9 & 98.2 & 52.8 & 72.4 \\
    & AM~\citep{DBLP:conf/iclr/YangW00G0T24} & 64.5 & 68.1 & 79.2 & 93.8 & 87.0 & 91.9 & 97.5 & 59.1 & 80.1 \\
    & AMPP~\citep{DBLP:conf/iclr/YangW00G0T24} & 66.6 & 68.3 & 82.2 & 94.2 & 89.6 & 89.0 & 98.3 & 60.6 & 81.1 \\
\midrule
\multirow{7}{*}{\rotatebox{90}{\textbf{equal}}} & MAP~\citep{liMAPLowcomputeModel2024} & 60.0 & 58.8 & 85.8 & 69.5 & 83.5 & 73.4 & 87.8 & 53.2 & 71.5 \\
   & AM+PM~\citep{chenParetoMerging2025}  & 70.1 & \textbf{74.2} & 87.3 & \textbf{96.5} & 90.2 & 95.6 & 98.5 & 66.7 & 84.9 \\
& \cellcolor{gray!20}AM+Ours& \cellcolor{gray!20}71.0& \cellcolor{gray!20}70.0& \cellcolor{gray!20}88.2& \cellcolor{gray!20}95.4& \cellcolor{gray!20}90.9& \cellcolor{gray!20}97.1& \cellcolor{gray!20}98.6& \cellcolor{gray!20}68.0& \cellcolor{gray!20}84.9\\
   & \cellcolor{gray!20}AM+Ours~($10\%$ unlabeled test data)& \cellcolor{gray!20}70.0& \cellcolor{gray!20}69.4& \cellcolor{gray!20}87.6& \cellcolor{gray!20}94.6& \cellcolor{gray!20}90.7& \cellcolor{gray!20}96.6& \cellcolor{gray!20}98.6& \cellcolor{gray!20}66.2& \cellcolor{gray!20}84.2\\
   & AMPP+PM~\citep{chenParetoMerging2025} & 70.6 & 73.9 & 87.5 & 96.7 & 90.8 & 96.7 & 98.6 & 67.2 & 85.2 \\
   & \cellcolor{gray!20}AMPP+Ours& \cellcolor{gray!20}\textbf{72.0}& \cellcolor{gray!20}70.4& \cellcolor{gray!20}\textbf{88.5}& \cellcolor{gray!20}95.8& \cellcolor{gray!20}\textbf{91.7}& \cellcolor{gray!20}\textbf{97.3}& \cellcolor{gray!20}\textbf{98.6}& \cellcolor{gray!20}\textbf{68.5}& \cellcolor{gray!20}\textbf{85.4}\\
   & \cellcolor{gray!20}AMPP+Ours~($10\%$ unlabeled test data)& \cellcolor{gray!20}71.0& \cellcolor{gray!20}69.3& \cellcolor{gray!20}88.0& \cellcolor{gray!20}95.2& \cellcolor{gray!20}91.4& \cellcolor{gray!20}96.8& \cellcolor{gray!20}98.6& \cellcolor{gray!20}67.1& \cellcolor{gray!20}84.7\\
\midrule
\multirow{6}{*}{\rotatebox{90}{\textbf{priority}}} & AM+PM~\citep{chenParetoMerging2025} & 71.1 & \textbf{74.2} & 89.0 & \textbf{97.6} & 92.1 & 97.4 & \textbf{99.0} & 64.0 & 85.5 \\
& \cellcolor{gray!20}AM+Ours& \cellcolor{gray!20}72.5& \cellcolor{gray!20}72.8& \cellcolor{gray!20}91.2& \cellcolor{gray!20}97.1& \cellcolor{gray!20}92.3& \cellcolor{gray!20}98.1& \cellcolor{gray!20}98.8& \cellcolor{gray!20}73.7& \cellcolor{gray!20}87.1\\
& \cellcolor{gray!20}AM+Ours~($10\%$ unlabeled test data)& \cellcolor{gray!20}71.5& \cellcolor{gray!20}71.5& \cellcolor{gray!20}90.4& \cellcolor{gray!20}96.3& \cellcolor{gray!20}92.0& \cellcolor{gray!20}97.4& \cellcolor{gray!20}98.8& \cellcolor{gray!20}71.1& \cellcolor{gray!20}86.1\\
& AMPP+PM~\citep{chenParetoMerging2025}  & 72.1 & 73.7 & 88.8 & 97.5 & 92.2 & 97.5 & \textbf{99.0} & 66.1 & 85.9 \\
& \cellcolor{gray!20}AMPP+Ours& \cellcolor{gray!20}\textbf{73.1}& \cellcolor{gray!20}73.0& \cellcolor{gray!20}\textbf{91.4}& \cellcolor{gray!20}97.5& \cellcolor{gray!20}\textbf{92.7}& \cellcolor{gray!20}\textbf{98.2}& \cellcolor{gray!20}98.8& \cellcolor{gray!20}\textbf{74.0}& \cellcolor{gray!20}\textbf{87.3}\\
& \cellcolor{gray!20}AMPP+Ours~($10\%$ unlabeled test data)& \cellcolor{gray!20}72.3& \cellcolor{gray!20}71.8& \cellcolor{gray!20}90.4& \cellcolor{gray!20}96.8& \cellcolor{gray!20}92.5& \cellcolor{gray!20}97.6& \cellcolor{gray!20}98.8& \cellcolor{gray!20}71.0& \cellcolor{gray!20}86.4\\
\midrule
\multirow{6}{*}{\rotatebox{90}{\textbf{one-hot}}}
& AM+PM~\citep{chenParetoMerging2025} & 70.9 & 59.9 & 89.5 & 96.9 & 85.4 & 97.2 & 99.1 & 62.5 & 82.7 \\
& \cellcolor{gray!20}AM+Ours& \cellcolor{gray!20}72.8& \cellcolor{gray!20}74.7& \cellcolor{gray!20}93.1& \cellcolor{gray!20}98.8& \cellcolor{gray!20}93.4& \cellcolor{gray!20}98.6& \cellcolor{gray!20}\textbf{99.3}& \cellcolor{gray!20}78.2& \cellcolor{gray!20}88.6\\
& \cellcolor{gray!20}AM+Ours~($10\%$ unlabeled test data)& \cellcolor{gray!20}72.0& \cellcolor{gray!20}72.1& \cellcolor{gray!20}91.1& \cellcolor{gray!20}96.3& \cellcolor{gray!20}92.8& \cellcolor{gray!20}97.8& \cellcolor{gray!20}99.0& \cellcolor{gray!20}70.6& \cellcolor{gray!20}86.5\\
& AMPP+PM~\citep{chenParetoMerging2025}  & 71.7 & 62.7 & 89.9 & 97.0 & 88.3 & 97.2 & 99.1 & 63.2 & 83.6 \\
& \cellcolor{gray!20}AMPP+Ours& \cellcolor{gray!20}\textbf{73.4}& \cellcolor{gray!20}\textbf{75.3}& \cellcolor{gray!20}\textbf{93.3}& \cellcolor{gray!20}\textbf{99.0}& \cellcolor{gray!20}\textbf{93.6}& \cellcolor{gray!20}\textbf{98.7}& \cellcolor{gray!20}\textbf{99.3}& \cellcolor{gray!20}\textbf{78.3}& \cellcolor{gray!20}\textbf{88.9}\\
& \cellcolor{gray!20}AMPP+Ours~($10\%$ unlabeled test data)& \cellcolor{gray!20}72.7& \cellcolor{gray!20}72.2& \cellcolor{gray!20}91.3& \cellcolor{gray!20}97.0& \cellcolor{gray!20}93.1& \cellcolor{gray!20}97.9& \cellcolor{gray!20}99.1& \cellcolor{gray!20}71.5& \cellcolor{gray!20}86.7\\
\bottomrule
\end{tabular*}
\caption{Test accuracies (\%) on eight datasets when merging eight ViT-B/32 models. We compare our method against non-controllable baselines and state-of-the-art controllable approaches under three preference scenarios. Our method consistently achieves the highest average accuracy in each group, with the best results highlighted in \textbf{bold}. }
\label{tab:main_comparison}
\end{table*}

\section{Experiments}
Our experiments are designed to answer four key questions:
\begin{itemize}
    \item[\textbf{Q1}] Does our method achieve state-of-the-art performance and controllability?

    \item[\textbf{Q2}] Does it offer a more efficient and scalable paradigm?
    
    \item[\textbf{Q3}] Is its empirical success rooted in its core design principles?

    \item[\textbf{Q4}] How sufficient is our linear correction compared to a non-linear alternative?
\end{itemize}
\subsection{Experimental Setting}
\paragraph{Experiment setup.} Following the protocols of Task Arithmetic~\citep{DBLP:conf/iclr/IlharcoRWSHF23} and Pareto Merging~\citep{chenParetoMerging2025}, we utilize the publicly available models provided by Task Arithmetic. This includes their pre-trained CLIP~\citep{DBLP:conf/icml/RadfordKHRGASAM21} ViT-B/32 visual encoder and the eight individual models already fine-tuned on diverse image classification tasks: SUN397~\citep{xiaoSUNDatabaseLargescale2010}, Cars~\citep{krause3DObjectRepresentations2013}, RESISC45~\citep{chengRemoteSensingImage2017}, EuroSAT~\citep{helberEuroSATNovelDataset2019}, SVHN~\citep{netzerReadingDigitsNatural2011}, GTSRB~\citep{stallkampManVsComputer2012}, MNIST~\citep{lecunGradientbasedLearningApplied1998}, and DTD~\citep{cimpoiDescribingTexturesWild2014}. We position our method against a hierarchy of baselines: performance upper bounds (individually fine-tuned models~(Individual), Multi-Task Learning~(MTL)), non-controllable merging methods (Task Arithmetic~(TA), TIES-Merging~(TIES)~\citep{DBLP:conf/nips/YadavTCRB23}, AdaMerging~(AM), AdaMerging++~(AMPP)~\citep{DBLP:conf/iclr/YangW00G0T24}), and the direct state-of-the-art in controllable merging, Pareto Merging (PM). For fair comparison, all results for our method are generated using the AM backbone unless specified otherwise. Please see Appendix A.1 and A.2 for the introduction of the tasks and implementation details.

\paragraph{Evaluation metrics.}
Our evaluation relies on three metrics. The primary metric for task-specific performance is classification accuracy. To assess the overall quality of the solution set approximating the Pareto front, we employ Hypervolume (HV)~\cite{zitzler1999multiobjective}. To measure how precisely a model's performance aligns with a given user preference, we use Uniformity~\cite{mahapatra2020multi}. To ensure a fair comparison across tasks with disparate difficulty, both metrics are computed on accuracies normalized relative to their individual model's performance. Detailed formulations for HV and Uniformity are provided in the Appendix A.3.

\subsection{Main Results}

To answer \textbf{Q1}, we first show in Table~\ref{tab:main_comparison} that our method provides a substantial uplift over the non-controllable backbones it enhances, such as AMPP (85.4\% vs. 81.1\%). We then compare against leading controllable methods. As evaluating all possible user preferences is infeasible, we test controllability under three representative scenarios reflecting common use cases: equal preference (uniform weights), priority preference (a single task is given 50\% weight, with the remaining weight distributed evenly among the other tasks), and one-hot preference (a single task receives all weight). Following the protocol in~\cite{chenParetoMerging2025}, we report MAP's result only under equal preference. Our primary comparison is against Pareto Merging (PM), focusing on its strongest data-based variants (AM+PM and AMPP+PM). This provides the most direct comparison, as our method is also a data-driven correction. However, our approach acts as a lightweight post-hoc step, in contrast to PM's complex joint optimization with these backbones. The results reveal a clear trend. Our method is competitive with PM in the balanced setting, but its advantage grows as preferences become more focused, culminating in the one-hot scenario. Here our method achieves a striking +5.3\% average accuracy gain over PM (88.9\% vs. 83.6\%), showing a superior ability to satisfy specific targets without the severe performance degradation of prior work. Notably, this performance is so data-efficient that our method using just 10\% of the calibration data still outperforms the fully-resourced PM baseline in the crucial priority and one-hot scenarios.

\begin{figure*}[t!]
    \centering
    \includegraphics[width=0.24\textwidth]{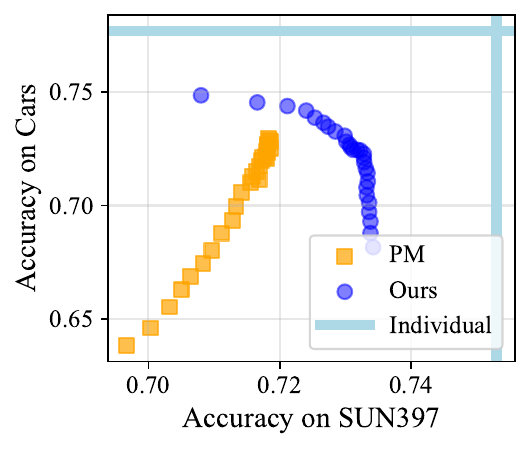}
\includegraphics[width=0.24\textwidth]{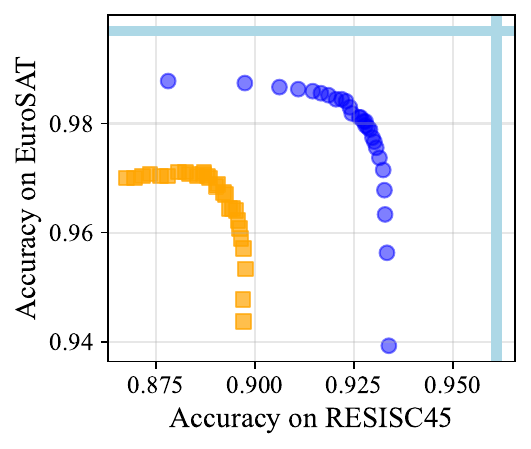}
    \includegraphics[width=0.24\textwidth]{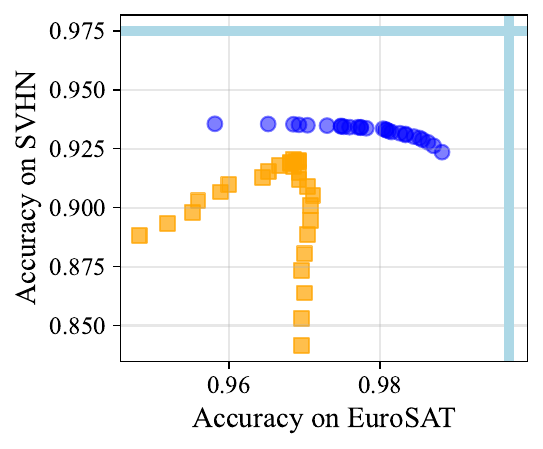}
\includegraphics[width=0.24\textwidth]{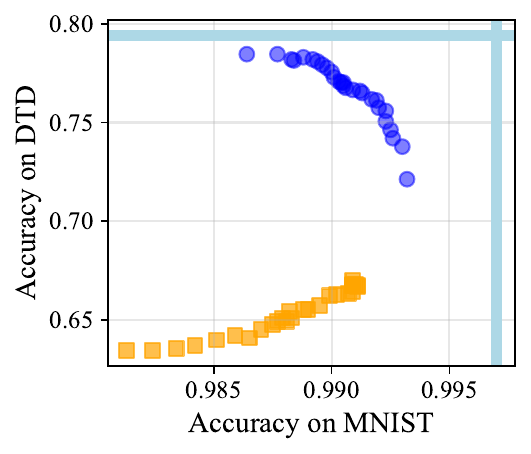}
\caption{Pairwise performance trade-offs within an 8-task merge (AMPP backbone). Each subplot shows the accuracy on a task pair as preferences are varied between them. Our method (blue) consistently achieves a superior and more stable Pareto front compared to Pareto Merging (orange), which fails to produce controllable responses on several critical task pairs (e.g., SUN397-Cars, MNIST-DTD).}
    \label{fig:2task_of_8task}
\end{figure*}

To specifically test controllability against PM, we designed a challenging experiment within the 8-task merge. We focused on a 3-task subspace (Cars, RESISC45, DTD) and allocated 60\% of the total preference weight to them while sampling preferences across their simplex uniformly, and distributing the rest equally among the other five tasks. 
Table~\ref{tab:hv_u} reports the HV and Uniformity (U) from this experiment, measured on both the 3-task subspace (@3) and the full 8-task space (@8). Our method achieves substantially higher scores across all metrics in this challenging scenario. The strong @8 scores show this precision is achieved without disproportionately sacrificing the non-prioritized tasks, demonstrating a more robust and holistic trade-off capability. The high @3 scores confirm its precise control, and Figure~\ref{fig:3task} provides the visual evidence: for our method (b), the accuracy of each task peaks sharply at its corresponding preference corner (e.g., Cars accuracy is highest at corner C). In contrast, PM's response (a) is more diffuse, with accuracy peaks that are less pronounced and often misaligned with the intended preference. 

To further probe this fine-grained control, we examine the direct trade-offs between pairs of tasks. We generated performance curves by smoothly varying the preference weight between two tasks (from 0 to 1), while setting the weights for all other six tasks to zero. Figure~\ref{fig:2task_of_8task} visualizes these pairwise Pareto frontiers. The results highlight two critical advantages of our method. First, our approach consistently produces superior frontiers, offering a better accuracy trade-off than PM. Second, and more importantly, our method demonstrates robust controllability where PM fails. While our method produces smooth, predictable trade-off curves across all pairs, PM's performance collapses on several critical pairings (e.g., SUN397-Cars, MNIST-DTD), failing to generate a controllable response. This shows our method's significantly enhanced reliability in satisfying specific user preferences.

\begin{table}[t!]
\centering
\small \begin{tabular}{l|cc|cc}
\toprule
\textbf{Method} &  \textbf{HV@3 $\uparrow$} & \textbf{HV@8 $\uparrow$} & \textbf{U@3 $\uparrow$} & \textbf{U@8 $\uparrow$} \\
\midrule
AM+PM & 76.77 & 60.90 & 53.47 & 31.98 \\
AM+Ours & \textbf{83.95} & \textbf{68.49} & \textbf{63.30} & \textbf{44.77} \\
\midrule
AMPP+PM & 74.92 & 61.18 & 53.41 & 26.59 \\
AMPP+Ours & \textbf{83.94} & \textbf{70.33} & \textbf{63.13} & \textbf{41.58} \\
\bottomrule
\end{tabular}
\caption{Performance comparison against Pareto Merging (PM). We report the Hypervolume (HV) and Uniformity (U) for both 3-task (@3) and 8-task (@8) settings.}
\label{tab:hv_u}
\end{table}

\begin{figure}[t!]
    \centering
    \begin{subfigure}[t]{0.48\textwidth}
     \includegraphics[width=\textwidth]{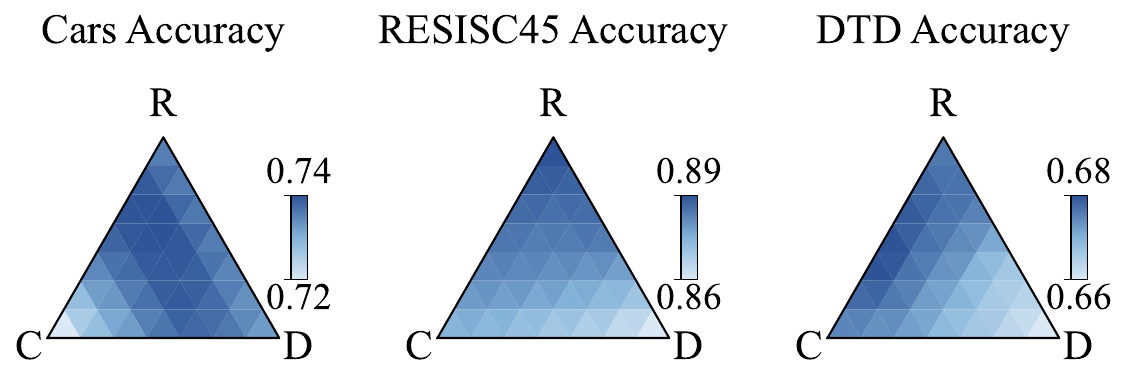}
    \caption{PM}
    \hfill
    \end{subfigure}
    \begin{subfigure}[t]{0.48\textwidth}
         \includegraphics[width=\textwidth]{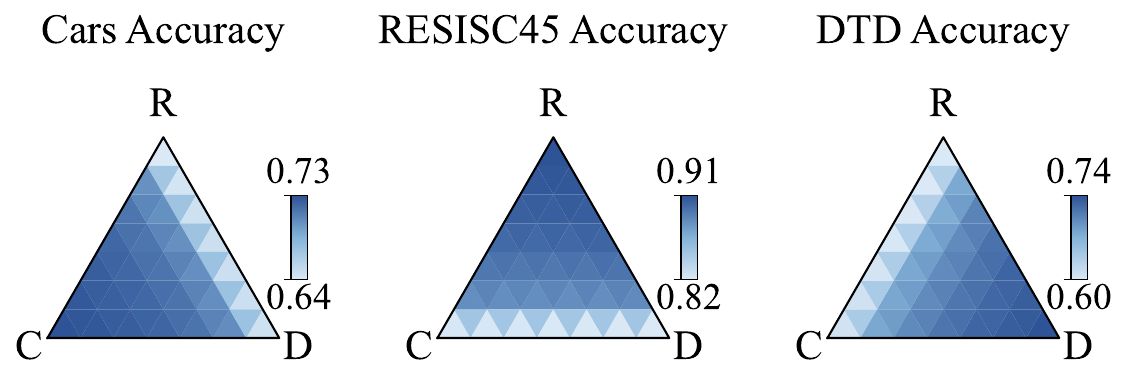}
    \caption{Ours}
    \end{subfigure}
    \caption{Visual evidence for the superior U@3 metrics in Table~\ref{tab:hv_u}. On a 3-task (C: Cars, R: RESISC45, D: DTD) subspace of an 8-task merge, our method (b) shows sharp, predictable accuracy peaks. This ideal control landscape directly explains its higher Uniformity. In contrast, PM (a) yields a misaligned response, resulting in lower scores.}
    \label{fig:3task}
\end{figure}

\begin{figure}[t!]
    \centering
{\includegraphics[width=0.48\textwidth]{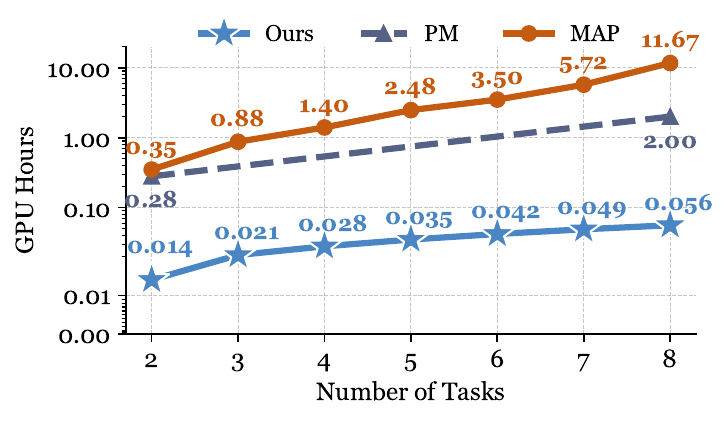}}
    \caption{Scalability comparison. We plot the time for the preference-aware stage against the number of tasks. Note the logarithmic scale on the y-axis, which highlights the orders-of-magnitude difference in efficiency.}
    \label{fig:scalability}
\end{figure}

\subsection{Efficiency and Scalability Analysis}
To answer \textbf{Q2}, we analyze our method's efficiency and scalability, focusing on the critical cost of enabling preference-awareness. Our method's efficiency stems from its decoupled design, which operates on a pre-merged backbone in a distinct, subsequent step. This contrasts with approaches like PM that require a costly joint optimization of their preference-aware mechanism alongside the backbone merge itself. Figure~\ref{fig:scalability} plots the computation time for the generation of the preference-aware model (for MAP) or the auxiliary module (for PM and Ours), confirming our method's linear time complexity. For an 8-task merge, our method completes in just 0.056 GPU hours, achieving an approximately 36x speedup over PM and a 208x speedup over MAP. For PM, we plot its reported costs for 2 and 8 tasks—the only configurations detailed in their work—and connect them with a dashed line to illustrate the scaling trend of its iterative optimization framework. This efficiency advantage is critical for practical applications and enables on-the-fly generation of tailored models.

\subsection{Ablation and Mechanism Validation}
To answer \textbf{Q3}, we analyze our framework's key components and hyperparameters.

\paragraph{Efficacy of optimal orthogonal regularization.}
We first study the effect of the regularization strength, $\beta$, which balances data-fitting with structural preservation. As Figure \ref{fig:regularizaton} shows, ablating the regularization term ($\beta=0$) leaves only the least-squares objective. This leads to overfitting on the calibration data, causing poor performance, especially when data is scarce. Introducing the regularization ($\beta>0$) provides a substantial performance boost by pulling the solution towards an optimal orthogonal transformation, which prevents the representation space from distorting.
The method is also robust to the specific choice of $\beta$, as performance remains stable across a wide range of values. The regularization enables high data efficiency. Even with only 10\% of the test data, our method achieves near-optimal accuracy. Finally, the regularized solution slightly outperforms a pure orthogonal transformation ($\beta \to \infty$, noted as \textbf{Orth.}), confirming that our method effectively combines a strong structural prior with valuable data-driven correction.

\begin{figure}[t!]
    \centering
    \clipbox{0.1cm 0.0cm 0.0cm 0.0cm}
{\includegraphics[width=0.48\textwidth]{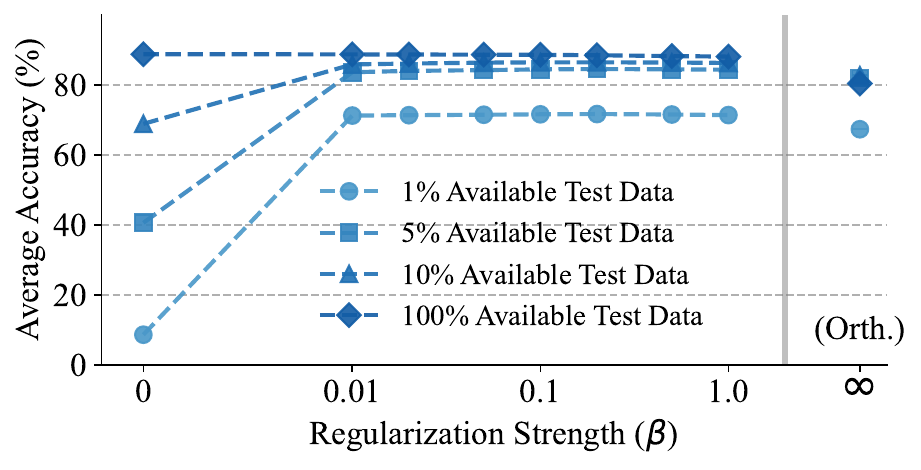}}
    \caption{Accuracy vs. regularization strength $\beta$. Our method is robust to the choice of $\beta$ and highly data-efficient, achieving near-optimal performance with only 10\% of test data.}
    \label{fig:regularizaton}
\end{figure}

\paragraph{Aggregation strategies.}
We compare our Pareto-optimal aggregation against two intuitive yet sub-optimal baselines for preference-aware weighting. The first~(Naive) simply applies a weighted average to the optimal linear transformation of each task. The second~(Polar) uses polar decomposition to separately weight the rotation and scaling components of each transformation before recombination. As shown in Table~\ref{tab:weighting_comparison}, while simpler heuristics can narrowly focus on the priority task, our principled aggregation achieves a much better overall trade-off. This result empirically validates our analysis: the data-aware weighting matrices $C_t$, which account for the feature structure of each task, are critical for finding a globally optimal solution, unlike Naive and Polar strategies.

\begin{table}[t!]
\centering
\small
\setlength{\tabcolsep}{3.5pt} \renewcommand{\arraystretch}{1.1} \begin{tabular}{@{}l|c|ccc@{}} \toprule
\multirow{2}{*}{\textbf{Method}} & 
    \textbf{Equal Pref.} & \multicolumn{3}{c}{\textbf{Priority Pref.}} \\ 
\cmidrule(lr){3-5} 
& \textbf{NAcc.}$\uparrow$ & \textbf{Pri. NAcc.}$\uparrow$ & \textbf{Non-Pri. NAcc.}$\uparrow$ & \textbf{HV}$\uparrow$ \\ 
        
\midrule
Naive & 92.2          & 97.1 & 87.7 & 67.6 \\
Polar   & 91.5          & \textbf{97.2} & 84.9 & 64.0 \\
Ours    & \textbf{93.5} & 96.0 & \textbf{92.6} & \textbf{70.2} \\
\bottomrule
\end{tabular}
\caption{Comparison of aggregation strategies under equal and priority preference settings. We report Normalized Accuracy (NAcc.), Priority-task NAcc., Non-Priority NAcc., and Hypervolume (HV).}
\label{tab:weighting_comparison}
\end{table}

\begin{figure}[t!]
    \centering
{\includegraphics[width=0.45\textwidth]{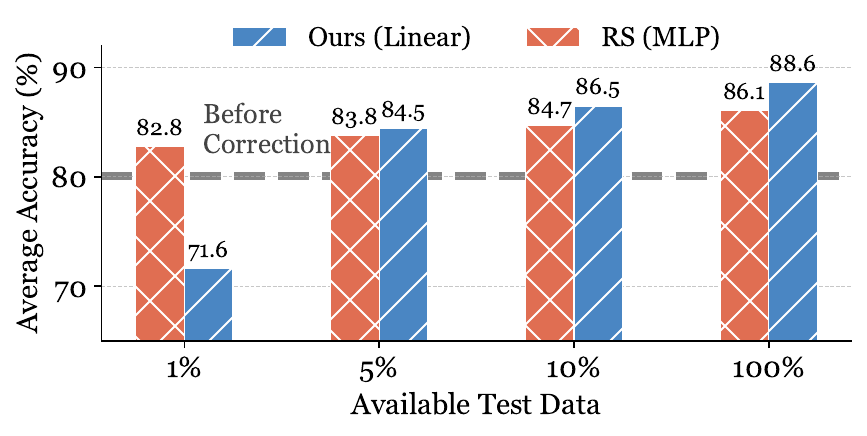}}
    \caption{Linear vs. non-linear correction. Comparison of our global linear map against the non-linear MLP corrector from Representation Surgery (RS) across varying amounts of available test data. The dashed line indicates the accuracy of the merged model (AM) before any correction is applied.}
    \label{fig:correction_comp}
\end{figure}
\subsection{Sufficiency of Linear Correction}

To answer \textbf{Q4}, we compare our linear correction to the non-linear MLP-based from Representation Surgery (RS). As shown in Figure~\ref{fig:correction_comp}, with minimal data (1\%), RS appears more robust, as our linear solution is sensitive to noise in this low-data regime. However, our method's performance rapidly surpasses the RS with just 5\% of the data and widens its lead thereafter. This demonstrates that our linear approach is more data-efficient, better capturing the underlying global distortion once a minimum viable sample set is available.

\section{Conclusion}  
We introduced ReACT, a new \textit{on-the-fly} framework for controllable model merging that bypasses costly parameter-space optimization by directly correcting the model's representations. Our key insight is that this correction can be modeled as a simple, regularized linear map, for which we derived a closed-form, Pareto-optimal solution in the representation alignment space. This analytical approach leads to a highly efficient and scalable method. Experiments confirmed ReACT achieves a state-of-the-art Pareto front and superior preference alignment in terms of task performance.
 
However, our method has limitations: the linear scalarization may not cover the entirety of a concave Pareto front in the performance space, and it assumes predominantly linear distortions requiring a small unlabeled calibration dataset. We hope our work encourages further exploration of simple, analytical solutions in representation space for complex model combination problems, including extending to models with more complex representation structures and exploring calibration-free alternatives.

\section*{Acknowledgements}
This work was supported by the National Natural Science Foundation of China under Grant 62071481.
 
\bibliography{references}

\clearpage

\appendix
\setcounter{secnumdepth}{3}
\twocolumn[
\begin{center}
    \vspace*{0.5em} 
    {\LARGE\bfseries \thetitle\par} 
    \vspace{1.2em}
    {\LARGE\bfseries Appendix\par} 
    
    \vspace{2.5em}
\end{center}
] 

\setcounter{section}{0}
\setcounter{theorem}{0}
\setcounter{figure}{0}
\setcounter{table}{0}
\setcounter{equation}{0}
\renewcommand{\thetheorem}{A.\arabic{theorem}}
\renewcommand{\theequation}{A.\arabic{equation}}
\renewcommand{\thefigure}{A.\arabic{figure}}
\renewcommand{\thetable}{A.\arabic{table}}

\section{Experimental Details}
\label{sec:appendix_exp_details}

\begin{table*}[t!]
\centering
\begin{tabular}{llccr}
\toprule
\textbf{Dataset} & \textbf{Domain} & \textbf{Image Size} & \textbf{\# Classes} & \textbf{\# Train / Test Samples} \\
\midrule
SUN397 \citep{xiaoSUNDatabaseLargescale2010} & Scene classification & Varies & 397 & 17,865 / 19,850 \\
Cars \citep{krause3DObjectRepresentations2013} & Car classification & Varies & 196 & 7,330 / 8,041 \\
RESISC45 \citep{chengRemoteSensingImage2017} & Remote sensing scene & $256 \times 256$ & 45 & 17,010 / 6,300 \\
EuroSAT \citep{helberEuroSATNovelDataset2019} & Satellite image & $64 \times 64$ & 10 & 21,600 / 2,700 \\
SVHN \citep{netzerReadingDigitsNatural2011} & House numbers & $32 \times 32$ & 10 & 68,257 / 26,032 \\
GTSRB \citep{stallkampManVsComputer2012} & Traffic sign & Varies & 43 & 23,976 / 12,630 \\
MNIST \citep{lecunGradientbasedLearningApplied1998} & Handwritten digits & $28 \times 28$ & 10 & 55,000 / 10,000 \\
DTD \citep{cimpoiDescribingTexturesWild2014} & Texture classification & Varies & 47 & 1,692 / 1,880 \\
\bottomrule
\end{tabular}
\caption{Summary of the eight image classification datasets used in our experiments. For each dataset, we list its domain, original image size, number of classes, and the number of samples in the training and testing splits, respectively.}
\label{tab:appendix_dataset_details}
\end{table*}

\subsection{Dataset Details}
Our experiments are conducted on a diverse suite of eight public image classification datasets, following the protocol established by \citet{DBLP:conf/iclr/IlharcoRWSHF23}. These datasets span various domains, including scene recognition, object classification, and digit recognition, presenting a challenging and comprehensive benchmark for evaluating model merging techniques. A detailed summary of the datasets, including the number of classes and the size of training and testing splits used in our evaluation, is provided in Table \ref{tab:appendix_dataset_details}.

\subsection{Implementation and Hyperparameter Settings}
To maintain consistency with prior work and isolate the performance gains from the merging algorithm itself, we use the publicly available fine-tuned models provided by \citet{DBLP:conf/iclr/IlharcoRWSHF23}. These models are based on the CLIP ViT-B/32 visual encoder \citep{DBLP:conf/icml/RadfordKHRGASAM21} pre-trained on ImageNet dataset. 
All experiments were conducted using PyTorch on a server equipped with a NVIDIA RTX 4090 GPU. For merging methods, including Task Arithmetic, TIES-Merging, AdaMerging (AM), AdaMerging++ (AMPP), we utilized FusionBench~\citep{DBLP:journals/corr/abs-2406-03280} implementations, and official implementation for Pareto Merging (PM). All baselines' implementations are followed their recommended hyperparameter settings to ensure fair and robust comparisons. 

Our method operates as a post-hoc correction applied to a pre-merged model. This pre-merged model can be obtained from any existing merging technique; in our main experiments (Table \ref{tab:main_comparison}), we use AdaMerging (AM) or AdaMerging++ (AMPP) as the backbone. The core of our method involves computing the correction matrix $W_\mathbf{p}$ using unlabeled calibration dataset. For all experiments, we construct this calibration set from the official test set of each respective task. The key regularization hyperparameter, $\beta$, was set to $0.1$.
Specifically, the representations $Z_{mtl}$ and $Z_{ind}$ refer to the final feature representations extracted from the visual encoder (e.g., CLIP ViT-B/32 in our case) immediately before being fed into task-specific classification heads. These are the inputs to the task-specific classification heads $h_t$ for prediction, after our proposed linear correction $W_p$ has been applied (i.e., $\hat{z}_{mtl} = W_p z_{mtl}$ is then passed to $h_t(\hat{z}_{mtl})$).

\subsection{Evaluation Metric Formulations}
\label{subsec:appendix_metrics}
As stated in the main paper, we use Hypervolume (HV) and Uniformity (U) to evaluate the quality of the generated Pareto front and the precision of preference alignment, respectively. Both metrics are computed using a normalized performance vector to ensure fair comparison across tasks of varying difficulty.

First, we define the \textbf{normalized accuracy vector}. For a set of $T$ tasks, let $\mathbf{A} = [A_1, A_2, \dots, A_T]^\mathsf{T}$ be the vector of test accuracies achieved by a merged model. Let $\mathbf{A}_{\text{expert}} = [A_{1}^\text{expert}, A_{2}^\text{expert}, \dots, A_{T}^\text{expert}]^\mathsf{T}$ be the vector of accuracies achieved by the individual expert models. The normalized accuracy vector $\mathbf{a}$ is then computed element-wise:
\begin{equation}
    \mathbf{a} = [a_1, a_2, \dots, a_T]^\mathsf{T}, \quad \text{where} \quad a_t = \frac{A_t}{A_{t}^\text{expert}}
\end{equation}
This normalization scales each task's performance relative to its expert's upper-bound, with a value of $1.0$ indicating that the merged model has matched the expert's performance on that task.

\paragraph{Hypervolume (HV).}
The Hypervolume (HV) indicator is a standard metric in multi-objective optimization for quantifying the quality of a set of non-dominated solutions (an approximate Pareto front) \citep{zitzler1999multiobjective}. Given a set of solution vectors $S = \{\mathbf{a}^{(1)}, \mathbf{a}^{(2)}, \dots, \mathbf{a}^{(k)}\}$ in the $T$-dimensional normalized accuracy space, the HV is the volume of the region dominated by these solutions and bounded by a reference point $\mathbf{r}$. Since our objective is to maximize accuracy, a higher HV indicates a better Pareto front.

Formally, the Hypervolume is defined as the Lebesgue measure $\lambda$ of the union of hyperrectangles formed by each solution $\mathbf{a} \in S$ and the reference point $\mathbf{r}$:
\begin{equation}
    \text{HV}(S) = \lambda \left( \bigcup_{\mathbf{a} \in S} \{ \mathbf{z} \in \mathbb{R}^T \mid \mathbf{r} \preceq \mathbf{z} \preceq \mathbf{a} \} \right)
\end{equation}
where $\preceq$ denotes element-wise inequality. In our experiments, we use the origin $\mathbf{r} = [0, 0, \dots, 0]^\mathsf{T}$ as the reference point, which is a standard choice for objectives that are non-negative.

\paragraph{Uniformity (U).}
The Uniformity metric is designed to measure how well the performance of a merged model aligns with a given user preference vector $\mathbf{p} = [p_1, p_2, \dots, p_T]^\mathsf{T}$. We adopt the principle from \citet{mahapatra2020multi}, adapting it from a loss-minimization to an accuracy-maximization context.

The core idea is that for a model to be perfectly aligned with user preferences, a higher preference $p_t$ for a task should result in a smaller performance gap relative to the expert model. We quantify this gap using the \textbf{normalized performance shortfall}, defined for each task as $s_t = 1 - a_t$, where $a_t$ is the normalized accuracy. The vector of shortfalls is $\mathbf{s} = [s_1, s_2, \dots, s_T]^\mathsf{T}$.

Perfect alignment is achieved when the preference-weighted shortfall is constant across all tasks, i.e., $p_1 s_1 = p_2 s_2 = \dots = p_T s_T$. To measure the deviation from this ideal, we first construct a probability distribution $\hat{\mathbf{s}}$ from these weighted shortfalls:
\begin{equation}
    \hat{s}_t = \frac{p_t s_t}{\sum_{i=1}^T p_i s_i}
\end{equation}
In the ideal case, this distribution becomes the uniform distribution $\mathbf{u} = [1/T, 1/T, \dots, 1/T]^\mathsf{T}$. The Uniformity score is then defined by how closely $\hat{\mathbf{s}}$ resembles $\mathbf{u}$, measured via the Kullback-Leibler (KL) divergence:
\begin{equation}
    U(\mathbf{a}, \mathbf{p}) = 1 - D_{KL}(\hat{\mathbf{s}} || \mathbf{u}) = 1 - \sum_{t=1}^T \hat{s}_t \log(T \hat{s}_t)
\end{equation}
A score closer to 1 indicates a smaller KL divergence and thus a more precise adherence of the model's performance to the user-specified priorities.

\section{Derivations and Proofs}
\label{sec:appendix_proofs}

\subsection{Derivation of the Optimal Regularized Linear Map}
\label{subsec:appendix_derivation}

In this section, we provide a detailed derivation for the closed-form solution of the regularized linear correction map $\hat{W}_t$ presented in Eq.~(\ref{eq:single_close_form}) of the main text. Our starting point is the regularized objective function from Eq.~(\ref{eq:orth_reg}), which seeks to find an optimal transformation matrix $W_t$ that minimizes the discrepancy between the corrected and ideal representations, while being regularized towards an orthogonal transformation $W_t^{\text{orth}}$:
\begin{equation}
L(W_t) = \|W_t \mathcal{Z}^\text{mtl}_{t} - \mathcal{Z}^\text{ind}_{t}\|_F^2 + \beta \|W_t - W_t^{\text{orth}}\|_F^2
\label{eq:appendix_objective_single}
\end{equation}
Here, $\mathcal{Z}^\text{mtl}_{t}, \mathcal{Z}^\text{ind}_{t} \in \mathbb{R}^{D_{\text{rep}} \times N}$ are matrices containing the representations for $N$ calibration samples, and $\beta$ is a non-negative regularization hyperparameter.

To find the minimum, we can expand the Frobenius norms using the property $\|A\|_F^2 = \mathrm{tr}(A A^\mathsf{T})$. The objective function $L(W_t)$ can be rewritten as:
\begin{align*}
L(W_t) = & \mathrm{tr}\left( (W_t \mathcal{Z}^\text{mtl}_{t} - \mathcal{Z}^\text{ind}_{t})(W_t \mathcal{Z}^\text{mtl}_{t} - \mathcal{Z}^\text{ind}_{t})^\mathsf{T} \right) \\
& + \beta \cdot \mathrm{tr}\left( (W_t - W_t^{\text{orth}})(W_t - W_t^{\text{orth}})^\mathsf{T} \right)
\end{align*}
Expanding the terms within the trace yields:
\begin{align*}
L(W_t) = & \mathrm{tr}(W_t \mathcal{Z}^\text{mtl}_{t} {\mathcal{Z}^\text{mtl}_{t}}^\mathsf{T} W_t^\mathsf{T} - 2 W_t \mathcal{Z}^\text{mtl}_{t} {\mathcal{Z}^\text{ind}_{t}}^\mathsf{T} + \mathcal{Z}^\text{ind}_{t} {\mathcal{Z}^\text{ind}_{t}}^\mathsf{T}) \\
& + \beta \cdot \mathrm{tr}(W_t W_t^\mathsf{T} - 2 W_t {W_t^{\text{orth}}}^\mathsf{T} + W_t^{\text{orth}} {W_t^{\text{orth}}}^\mathsf{T})
\end{align*}
This objective is a convex quadratic function of $W_t$. We can find the optimal $W_t$ by taking the derivative of $L(W_t)$ with respect to $W_t$ and setting it to zero. Using standard matrix calculus identities, specifically $\frac{\partial}{\partial X} \mathrm{tr}(AXB) = A^\mathsf{T} B^\mathsf{T}$ and $\frac{\partial}{\partial X} \mathrm{tr}(XX^\mathsf{T}) = 2X$, we compute the gradient:
\begin{align*}
\frac{\partial L(W_t)}{\partial W_t} = & \frac{\partial}{\partial W_t} \mathrm{tr}(W_t (\mathcal{Z}^\text{mtl}_{t} {\mathcal{Z}^\text{mtl}_{t}}^\mathsf{T}) W_t^\mathsf{T}) \\
& - 2 \frac{\partial}{\partial W_t} \mathrm{tr}(W_t (\mathcal{Z}^\text{mtl}_{t} {\mathcal{Z}^\text{ind}_{t}}^\mathsf{T})) \\
& + \beta \frac{\partial}{\partial W_t} \mathrm{tr}(W_t W_t^\mathsf{T}) - 2 \beta \frac{\partial}{\partial W_t} \mathrm{tr}(W_t {W_t^{\text{orth}}}^\mathsf{T}) \\
= & 2 W_t (\mathcal{Z}^\text{mtl}_{t} {\mathcal{Z}^\text{mtl}_{t}}^\mathsf{T}) - 2 \mathcal{Z}^\text{ind}_{t} {\mathcal{Z}^\text{mtl}_{t}}^\mathsf{T} \\
& + 2 \beta W_t - 2 \beta W_t^{\text{orth}}
\end{align*}
Setting the gradient to zero gives us the following equation:
\begin{equation*}
W_t (\mathcal{Z}^\text{mtl}_{t} {\mathcal{Z}^\text{mtl}_{t}}^\mathsf{T}) - \mathcal{Z}^\text{ind}_{t} {\mathcal{Z}^\text{mtl}_{t}}^\mathsf{T} + \beta W_t - \beta W_t^{\text{orth}} = 0
\end{equation*}
Now, we can solve for $W_t$ by isolating the terms containing it:
\begin{align*}
W_t (\mathcal{Z}^\text{mtl}_{t} {\mathcal{Z}^\text{mtl}_{t}}^\mathsf{T}) + \beta W_t I &= \mathcal{Z}^\text{ind}_{t} {\mathcal{Z}^\text{mtl}_{t}}^\mathsf{T} + \beta W_t^{\text{orth}} \\
W_t (\mathcal{Z}^\text{mtl}_{t} {\mathcal{Z}^\text{mtl}_{t}}^\mathsf{T} + \beta I) &= \mathcal{Z}^\text{ind}_{t} {\mathcal{Z}^\text{mtl}_{t}}^\mathsf{T} + \beta W_t^{\text{orth}}
\end{align*}
Finally, by right-multiplying with the inverse of the term in the parenthesis, we arrive at the closed-form solution for the optimal regularized linear map, which we denote as $\hat{W}_t$:
\begin{equation}
\hat{W}_t = (\mathcal{Z}^\text{ind}_{t} {\mathcal{Z}^\text{mtl}_{t}}^\mathsf{T} + \beta W_t^{\text{orth}})(\mathcal{Z}^\text{mtl}_{t} {\mathcal{Z}^\text{mtl}_{t}}^\mathsf{T} + \beta I)^{-1}
\end{equation}

\subsection{Proof of Pareto Optimality (Proposition 1)}
\label{subsec:appendix_proof_prop1}

This section provides the proof for Proposition~1 as stated in the main text. We first restate the proposition for clarity.

\begin{proposition}
For any preference $\mathbf{p}$, the unique Pareto-optimal solution $W_\mathbf{p}$ to the multi-objective problem in Eq.~\ref{eq:moo} has a closed-form expression given by:
\begin{equation}
\begin{split}
W_\mathbf{p} = & \left( \sum_{t=1}^{T} p_t (\mathcal{Z}^\text{ind}_t {\mathcal{Z}^\text{mtl}_t}^\mathsf{T} + \beta W^\text{orth}_t) \right) \\ & \left( \sum_{t=1}^{T} p_t (\mathcal{Z}^\text{mtl}_t {\mathcal{Z}^\text{mtl}_t}^\mathsf{T} + \beta I) \right)^{-1}
\end{split}
\end{equation}
\end{proposition}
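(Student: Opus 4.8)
The plan is to leverage the convex-quadratic structure of the scalarized objective so that the whole proposition collapses onto a single first-order optimality condition, reusing the per-task gradient already computed for $\hat{W}_t$ in Appendix B.1. First I would form the linearly scalarized objective $g(W) = \sum_{t=1}^{T} p_t L_t(W)$ from Eq.~\ref{eq:scalarization}, where each $L_t$ is the regularized loss of Eq.~\ref{eq:orth_reg}. Each $L_t$ is a convex quadratic in $W$, and since $\beta>0$ the nonnegative combination $g$ is itself a convex quadratic; I would argue it is in fact strictly convex, so that it admits a unique global minimizer characterized by $\nabla g = 0$.

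The central algebraic step is to differentiate $g$. Because differentiation is linear, I can reuse the per-task gradient from Appendix B.1, namely
\[
\frac{\partial L_t}{\partial W} = 2W\,\mathcal{Z}^\text{mtl}_t {\mathcal{Z}^\text{mtl}_t}^\mathsf{T} - 2\,\mathcal{Z}^\text{ind}_t {\mathcal{Z}^\text{mtl}_t}^\mathsf{T} + 2\beta W - 2\beta W_t^\text{orth},
\]
and sum these with weights $p_t$. Setting the aggregate gradient to zero and collecting the terms linear in $W$ gives the normal equation
\[
W\!\left( \sum_{t=1}^{T} p_t\big(\mathcal{Z}^\text{mtl}_t {\mathcal{Z}^\text{mtl}_t}^\mathsf{T} + \beta I\big) \right) = \sum_{t=1}^{T} p_t\big(\mathcal{Z}^\text{ind}_t {\mathcal{Z}^\text{mtl}_t}^\mathsf{T} + \beta W_t^\text{orth}\big),
\]
and right-multiplying by the inverse of the bracketed matrix reproduces Eq.~\ref{eq:pareto_cf} exactly.

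Before inverting I must verify that the aggregate matrix $M = \sum_{t} p_t(\mathcal{Z}^\text{mtl}_t {\mathcal{Z}^\text{mtl}_t}^\mathsf{T} + \beta I)$ is invertible. Using $\sum_t p_t = 1$, this simplifies to $M = \sum_{t} p_t\,\mathcal{Z}^\text{mtl}_t {\mathcal{Z}^\text{mtl}_t}^\mathsf{T} + \beta I$; each Gram term is positive semidefinite and $\beta I \succ 0$, so $M \succ 0$ for every $\mathbf{p}$ on the simplex. The same positive-definiteness is precisely what certifies the strict convexity of $g$ and hence the uniqueness of the minimizer.

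The final step promotes ``minimizer of the scalarization'' to ``Pareto-optimal.'' I would invoke the standard scalarization argument in its uniqueness form: if some $W'$ dominated $W_\mathbf{p}$ (no worse on every task, strictly better on at least one), then $g(W') \le g(W_\mathbf{p})$, and uniqueness of the minimizer forces $W' = W_\mathbf{p}$, contradicting the claimed strict improvement. I expect this last step to be the main obstacle, since linear scalarization with nonnegative weights generically yields only \emph{weak} Pareto optimality. The clean resolution is to lean on uniqueness from strict convexity rather than on strict positivity of the weights, because the proposition must cover all $\mathbf{p}$ on the simplex, including the zero-entry preferences used in the priority and one-hot experiments.
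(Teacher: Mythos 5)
Your derivation is correct and its computational core is exactly the paper's: form the scalarized objective, reuse the per-task gradient from Appendix B.1, set the weighted sum of gradients to zero, and right-multiply the resulting normal equation by the inverse of $\sum_t p_t(\mathcal{Z}^\text{mtl}_t {\mathcal{Z}^\text{mtl}_t}^\mathsf{T} + \beta I)$. Where you genuinely depart from the paper is in the two justificatory steps, and in both cases your version is tighter. First, the paper never checks that the aggregate matrix is invertible before inverting it; your observation that $\sum_t p_t = 1$ collapses it to $\sum_t p_t\,\mathcal{Z}^\text{mtl}_t {\mathcal{Z}^\text{mtl}_t}^\mathsf{T} + \beta I \succ 0$ supplies that missing certificate and simultaneously yields strict convexity. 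Second, for the promotion from ``scalarization minimizer'' to ``Pareto-optimal,'' the paper invokes the standard theorem that requires strictly positive weights $p_t > 0$, and for the zero-weight case (which its own priority and one-hot experiments rely on) it only asserts weak Pareto optimality plus ``mild additional conditions that hold in our case,'' without stating or verifying them. Your uniqueness-form argument --- a dominating $W'$ would satisfy $g(W') \le g(W_\mathbf{p})$, hence $W' = W_\mathbf{p}$ by strict convexity, contradicting strict improvement --- closes exactly this gap and covers the entire simplex with a single mechanism. The trade-off is minor: your argument leans on $\beta > 0$ (the paper nominally allows $\beta \ge 0$), but the paper's formula implicitly needs the same positivity for the inverse to exist, so nothing is lost; in effect you have proved the proposition under the hypotheses the paper actually uses, with the edge cases handled rather than waved at.
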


\begin{proof}
The proof consists of two main parts. First, we establish the Pareto optimality of any solution found via linear scalarization. Second, we derive the specific closed-form expression for this solution.

\paragraph{Part 1: Pareto Optimality.}
The foundation of this proof lies in the convexity of the individual objective functions. For each task $t$, the objective function is given by:
\begin{equation*}
L_t(W) = \|W \mathcal{Z}^\text{mtl}_{t} - \mathcal{Z}^\text{ind}_{t}\|_F^2 + \beta \|W - W_t^{\text{orth}}\|_F^2
\end{equation*}
This function is a sum of two squared Frobenius norms. The term $\|W \mathcal{Z}^\text{mtl}_{t} - \mathcal{Z}^\text{ind}_{t}\|_F^2$ is the squared $\ell_2$-norm of an affine transformation of the elements of $W$, which is convex. Similarly, $\|W - W_t^{\text{orth}}\|_F^2$ is a strictly convex quadratic function of $W$. Since the sum of convex functions is also convex, and $\beta \ge 0$, each individual objective $L_t(W)$ is a convex function of $W$.

The linear scalarization method combines these individual objectives using a preference vector $\mathbf{p} = [p_1, \dots, p_T]^\mathsf{T}$ with $p_t \ge 0$ and $\sum p_t = 1$, creating a single composite objective:
\begin{equation*}
L_{\mathbf{p}}(W) = \sum_{t=1}^{T} p_t L_t(W)
\end{equation*}
As this is a non-negative weighted sum of convex functions, the composite objective $L_{\mathbf{p}}(W)$ is also convex. A fundamental theorem in multi-objective optimization states that for a convex multi-objective problem, any minimizer of the scalarized objective with strictly positive weights ($p_t > 0$ for all $t$) is a Pareto-optimal solution for the original vector optimization problem. Even if some weights are zero, the solution is guaranteed to be weakly Pareto-optimal, and is Pareto-optimal under mild additional conditions that hold in our case. Thus, the solution $W_\mathbf{p}$ that minimizes $L_{\mathbf{p}}(W)$ is indeed Pareto-optimal.

\paragraph{Part 2: Derivation of the Closed-Form Solution.}
To find the unique minimizer $W_\mathbf{p}$ of the convex function $L_{\mathbf{p}}(W)$, we take its gradient with respect to $W$ and set it to zero. Leveraging the linearity of differentiation, we have:
\begin{equation*}
\frac{\partial L_{\mathbf{p}}(W)}{\partial W} = \frac{\partial}{\partial W} \sum_{t=1}^{T} p_t L_t(W) = \sum_{t=1}^{T} p_t \frac{\partial L_t(W)}{\partial W}
\end{equation*}
From the derivation in Appendix~C.\ref{subsec:appendix_derivation}, we already have the expression for the gradient of the individual objective $L_t(W)$:
\begin{equation*}
\frac{\partial L_t(W)}{\partial W} = 2 W (\mathcal{Z}^\text{mtl}_{t} {\mathcal{Z}^\text{mtl}_{t}}^\mathsf{T} + \beta I) - 2 (\mathcal{Z}^\text{ind}_{t} {\mathcal{Z}^\text{mtl}_{t}}^\mathsf{T} + \beta W_t^{\text{orth}})
\end{equation*}
Substituting this into the equation for the composite gradient and setting it to zero yields:
\begin{equation*}
\sum_{t=1}^{T} p_t \left( 2 W (\mathcal{Z}^\text{mtl}_{t} {\mathcal{Z}^\text{mtl}_{t}}^\mathsf{T} + \beta I) - 2 (\mathcal{Z}^\text{ind}_{t} {\mathcal{Z}^\text{mtl}_{t}}^\mathsf{T} + \beta W_t^{\text{orth}}) \right) = 0
\end{equation*}
We can rearrange the terms to solve for $W$. First, we move all terms not involving $W$ to the right-hand side:
\begin{align*}
\sum_{t=1}^{T} p_t W (\mathcal{Z}^\text{mtl}_{t} {\mathcal{Z}^\text{mtl}_{t}}^\mathsf{T} + \beta I) = \sum_{t=1}^{T} p_t (\mathcal{Z}^\text{ind}_{t} {\mathcal{Z}^\text{mtl}_{t}}^\mathsf{T} + \beta W_t^{\text{orth}})
\end{align*}
Next, we factor out $W$ from the left-hand side:
\begin{align*}
W \left( \sum_{t=1}^{T} p_t (\mathcal{Z}^\text{mtl}_{t} {\mathcal{Z}^\text{mtl}_{t}}^\mathsf{T} + \beta I) \right) = \sum_{t=1}^{T} p_t (\mathcal{Z}^\text{ind}_{t} {\mathcal{Z}^\text{mtl}_{t}}^\mathsf{T} + \beta W_t^{\text{orth}})
\end{align*}
Finally, by right-multiplying by the inverse of the matrix factor on the left, we obtain the unique closed-form solution:
\begin{equation}
\begin{split}
W_\mathbf{p} = & \left( \sum_{t=1}^{T} p_t (\mathcal{Z}^\text{ind}_t {\mathcal{Z}^\text{mtl}_t}^\mathsf{T} + \beta W^\text{orth}_t) \right) \\ & \left(\sum_{t=1}^{T} p_t (\mathcal{Z}^\text{mtl}_t {\mathcal{Z}^\text{mtl}_t}^\mathsf{T} + \beta I) \right)^{-1}  
\end{split}
\label{eq:appendix_pareto_cf}
\end{equation}
This concludes the proof of Proposition~1.
\end{proof}

\section{Detailed Complexity Analysis}

\begin{table*}[t!]
\centering
\resizebox{\textwidth}{!}{\begin{tabular}{@{}l|ll|ll@{}}
\toprule
\textbf{Method} & \textbf{One-Time Setup (Time)} & \textbf{Setup (Storage)} & \textbf{On-the-fly Generation (Time)} & \textbf{Final Model (Storage)} \\ \midrule
MAP~\citep{liMAPLowcomputeModel2024} & $O(K T \cdot T_{\text{eval}} + K T^5)$ (very high) & $O(Td)$ (high, scales with T) & $O(Td)$ (slow) & $O(d)$ \\
Pareto Merging~\citep{chenParetoMerging2025} & $O(T_{\text{steps}} T \cdot T_{\text{prop}})$ (very high) & $O(d)$ (lowest) & $O(1)$ (near-instant) & $O(d)$ \\
\textbf{Ours} & $O(T(N D_{\text{rep}}^2 + D_{\text{rep}}^3))$ (low, non-iterative) & $O(d + T D_{\text{rep}}^2)$ (low) & $O(T D_{\text{rep}}^2 + D_{\text{rep}}^3)$ (near-instant) & $O(d + D_{\text{rep}}^2)$ \\ \bottomrule
\end{tabular}}
\caption{Complexity comparison of controllable model merging methods. $T$: number of tasks, $d$: model parameters, $D_{\text{rep}}$: representation dimension, $N$: calibration set size. $K$ is the number of samples for MAP's surrogate model fitting, and $T_{\text{eval}}$, $T_{\text{prop}}$ are costs for model evaluation and propagation, respectively. Our method offers a superior trade-off, avoiding the prohibitive setup costs of PM and MAP and the high storage of MAP.}
\label{tab:complexity_summary}
\end{table*}

In this section, we provide a detailed analysis of the storage and computational complexity of our proposed method. We compare it against two leading controllable merging frameworks: MAP~\citep{liMAPLowcomputeModel2024} and Pareto Merging (PM)~\citep{chenParetoMerging2025}. For clarity, we define the following variables used throughout the analysis: $T$ is the number of tasks to be merged, $d$ is the number of parameters in the backbone model (i.e., the dimensionality of a task vector), $D_{\text{rep}}$ is the dimension of the model's representation space where the correction is applied, and $N$ is the number of samples in the calibration set used to compute the representations.

\subsection{Storage Complexity}
The storage requirements of a controllable merging method determine its practicality, especially as the number of tasks $T$ increases. The methods exhibit significant differences in this regard.

Our method requires storing the base merged model $\theta_{\text{merge}}$ (e.g., from AdaMerging), which has $O(d)$ parameters. Additionally, to enable on-the-fly generation, we must store the pre-computed matrices for each task, as derived in Proposition~\ref{prop:pareto_optimal_trans}. Specifically, for each of the $T$ tasks, we store two matrices of size $D_{\text{rep}} \times D_{\text{rep}}$. This results in a total storage complexity of $O(d + T D_{\text{rep}}^2)$. Since $D_{\text{rep}} \ll d$, this is highly efficient.

In contrast, MAP requires storing the full task vector $\Delta_t$ for every task, in addition to the base model $\theta_0$. This is because its inference-time step involves a linear combination of these high-dimensional vectors. Consequently, its storage complexity is $O(Td)$, which scales linearly with the number of tasks and can become prohibitive for large $T$ or large models $d$.

Pareto Merging (PM) is the most storage-efficient. After its extensive training phase, it produces a single base model augmented with a small set of low-rank parameters that encode the preference-handling mechanism. The storage for these additional parameters is negligible compared to the base model. Thus, its storage complexity is approximately $O(d)$, independent of the number of tasks $T$. While optimal in storage, this efficiency is achieved at a very high computational setup cost, as detailed next.

\subsection{Computational Complexity}
We analyze the computational cost by decoupling it into two phases: a one-time setup phase to prepare the preference-aware framework, and the generation phase where a model is produced for a specific user preference.
\paragraph{One-Time Setup.}  This phase is where the most significant differences between methods emerge. Our approach is analytical and non-iterative. The setup involves two main steps: (1) a single forward pass for each of the $T$ individual models and the base merged model on the $N$ calibration samples to extract representations, with a cost of $O(T N \cdot \text{Cost}(g_\theta))$; and (2) computation of the required matrices, which is dominated by SVD and matrix multiplications, costing $O(T(N D_{\text{rep}}^2 + D_{\text{rep}}^3))$. The entire process is a direct, one-shot computation.
    Conversely, competing methods rely on costly, iterative optimization. MAP's setup involves an expensive data collection stage, where it samples $K$ different merging coefficients, constructs the corresponding models, and evaluates them on all $T$ tasks. This has a complexity of $O(K T \cdot T_{\text{eval}})$, where $T_{\text{eval}}$ is the high cost of a full model evaluation. Subsequently, it fits a quadratic surrogate model, which adds a polynomial cost of $O(K T^5)$. The evaluation stage is the primary bottleneck.
    Pareto Merging's setup consists of a full, end-to-end training procedure using gradient descent. For each of the $T_{\text{steps}}$ training steps, it must sample a preference, compute the loss for all $T$ tasks (requiring $T$ forward passes), and perform a backward pass to update the learnable parameters. This results in a total complexity of $O(T_{\text{steps}} \cdot T \cdot T_{\text{prop}})$, where $T_{\text{prop}}$ is the cost of a forward-backward pass. This process is computationally intensive and can be unstable.
    \paragraph{Inference-Time Preference-Aware Generation.} This phase measures the cost of generating a Pareto-optimal model given a new user preference vector $\mathbf{p}$. Our method is exceptionally fast. It uses the pre-computed matrices to analytically compute the optimal transformation $W_{\mathbf{p}}$ via Eq.~\ref{eq:pareto_cf}. This involves $T$ matrix additions, one matrix inversion, and one matrix multiplication, all on $D_{\text{rep}} \times D_{\text{rep}}$ matrices. The total cost is $O(T D_{\text{rep}}^2 + D_{\text{rep}}^3)$, which is nearly instantaneous in practice.
    Pareto Merging is similarly fast at this stage. Given a preference vector, it computes a low-rank update and adds it to the base model's weights. This involves only small-scale tensor and matrix operations, making the generation extremely efficient.
    MAP is the slowest of the three at this stage. To generate a model for a given preference, it must first find the corresponding merging coefficients $\mathbf{c}^*$ from its pre-computed Pareto front and then construct the final model by performing a weighted sum of all $T$ full-parameter task vectors: $\theta_0 + \sum_t c_t^* \Delta_t$. This operation has a complexity of $O(Td)$, which is substantially more expensive than the other methods, as it involves operations on the entire parameter space.

\section{Additional Experimental Results}
\label{sec:appendix_additional_results}

\subsection{Results for ViT-L/14 Models}
\begin{table*}[t!]
\centering
\tabcolsep=0.30em
\begin{tabular*}{\textwidth}{@{\extracolsep{\fill}}c|l|cccccccc|c}
\toprule
Pref. & Method & SUN397 & Cars & RESISC45 & EuroSAT & SVHN & GTSRB & MNIST & DTD & Average \\
\midrule
\multirow{2}{*}{--} & Individual & 82.3 & 92.4 & 97.4 & 100 & 98.1 & 99.2 & 99.7 & 84.1 & 94.2 \\
    & MTL & 80.8 & 90.6 & 96.3 & 96.3 & 97.6 & 99.1 & 99.6 & 84.4 & 93.5 \\
\midrule
\multirow{5}{*}{--} & TA~\citep{DBLP:conf/iclr/IlharcoRWSHF23} & 73.9 & 82.1 & 86.6 & 94.1 & 87.9 & 86.7 & 98.9 & 65.6 & 84.5 \\
    & TIES~\citep{DBLP:conf/nips/YadavTCRB23} & 75.9 & 85.4 & 89.0 & 95.6 & 89.2 & 87.1 & 99.0 & 68.7 & 86.2 \\
    & AM~\citep{DBLP:conf/iclr/YangW00G0T24} & 79.0 & 90.3 & 90.8 & 96.2 & 93.4 & 98.0 & 99.0 & 79.9 & 90.8 \\
    & AMPP~\citep{DBLP:conf/iclr/YangW00G0T24} & 79.4 & 90.3 & 91.6 & 97.4 & 93.4 & 97.5 & 99.0 & 79.2 & 91.0 \\
\midrule
\multirow{4}{*}{\rotatebox{90}{\textbf{equal}}} & MAP~\citep{liMAPLowcomputeModel2024} & 76.0 & 84.1 & 88.7 & 87.8 & 90.1 & 87.9 & 97.8 & 71.3 & 85.4 \\
   & AM+PM~\citep{chenParetoMerging2025}  & \textbf{79.8} & \textbf{90.7} & 91.4 & 96.5 & 93.5 & 98.3 & 99.1 & 80.6 & 91.2 \\ 
   & \cellcolor{gray!20}AM+Ours& \cellcolor{gray!20}79.6& \cellcolor{gray!20}90.5& \cellcolor{gray!20}\textbf{94.0}& \cellcolor{gray!20}\textbf{97.1}& \cellcolor{gray!20}\textbf{94.3}& \cellcolor{gray!20}\textbf{98.8}& \cellcolor{gray!20}\textbf{99.2}& \cellcolor{gray!20}\textbf{80.9}& \cellcolor{gray!20}\textbf{91.8}\\
   & \cellcolor{gray!20}AM+Ours~($10\%$ unlabeled test data)& \cellcolor{gray!20}79.2& \cellcolor{gray!20}90.3& \cellcolor{gray!20}93.8& \cellcolor{gray!20}96.8& \cellcolor{gray!20}94.1& \cellcolor{gray!20}98.7& \cellcolor{gray!20}99.1& \cellcolor{gray!20}80.3& \cellcolor{gray!20}91.5\\
\midrule
\multirow{3}{*}{\rotatebox{90}{\textbf{priority}}} & AM+PM~\citep{chenParetoMerging2025} & \textbf{80.5} & \textbf{91.7} & 91.9 & \textbf{98.1} & \textbf{96.4} & 98.8 & 99.1 & 80.8 & 92.1 \\
& \cellcolor{gray!20}AM+Ours& \cellcolor{gray!20}80.4& \cellcolor{gray!20}91.1& \cellcolor{gray!20}\textbf{95.3}& \cellcolor{gray!20}97.6& \cellcolor{gray!20}95.2& \cellcolor{gray!20}\textbf{99.1}& \cellcolor{gray!20}\textbf{99.3}& \cellcolor{gray!20}\textbf{83.0}& \cellcolor{gray!20}\textbf{92.6}\\
& \cellcolor{gray!20}AM+Ours~($10\%$ unlabeled test data)& \cellcolor{gray!20}79.8& \cellcolor{gray!20}90.7& \cellcolor{gray!20}95.0& \cellcolor{gray!20}97.3& \cellcolor{gray!20}95.0& \cellcolor{gray!20}98.9& \cellcolor{gray!20}99.3& \cellcolor{gray!20}81.8& \cellcolor{gray!20}92.2\\
\midrule
\multirow{3}{*}{\rotatebox{90}{\textbf{one-hot}}}  
& AM+PM~\citep{chenParetoMerging2025} & - & - & - & - & - & - & - & - & - \\
& \cellcolor{gray!20}AM+Ours& \cellcolor{gray!20}80.7& \cellcolor{gray!20}91.7& \cellcolor{gray!20}96.3& \cellcolor{gray!20}99.1& \cellcolor{gray!20}96.2& \cellcolor{gray!20}99.2& \cellcolor{gray!20}99.6& \cellcolor{gray!20}84.7& \cellcolor{gray!20}93.5\\
& \cellcolor{gray!20}AM+Ours~($10\%$ unlabeled test data)& \cellcolor{gray!20}80.0& \cellcolor{gray!20}91.1& \cellcolor{gray!20}95.4& \cellcolor{gray!20}97.7& \cellcolor{gray!20}95.7& \cellcolor{gray!20}98.8& \cellcolor{gray!20}99.3& \cellcolor{gray!20}80.1& \cellcolor{gray!20}92.3\\
\bottomrule
\end{tabular*}
\caption{Test accuracies (\%) on eight datasets when merging eight ViT-L/14 models. We compare our method against non-controllable baselines and state-of-the-art controllable approaches under three preference scenarios. Our method consistently achieves the highest average accuracy in each group, with the best results highlighted in bold. }
\label{tab:main_comparison_vit-l-14}
\end{table*}

To test our method's scalability and robustness, we extended our experiments to the larger ViT-L/14 model. These results supplement our main findings, showing our method remains effective on different model scales.

Table~\ref{tab:main_comparison_vit-l-14} shows the test accuracies for merging eight tasks on the ViT-L/14 backbone. Consistent with the ViT-B/32 results, our method (AM+Ours) matches or outperforms the state-of-the-art competitor (AM+PM) on average under all preference settings. Our method shows a clear advantage in average accuracy under \textit{equal} and \textit{priority} preferences. It performs especially well in the \textit{one-hot} setting, which requires very precise control. Here, our method achieves 93.5\% average accuracy, proving its ability to meet specific task demands. (We could not reproduce the AM+PM result for the one-hot preference due to limited training resources on training PM modular with 8 ViT-L/14 models.)

Furthermore, our method performs well using only 10\% of the unlabeled data for calibration. In \textit{priority} and \textit{one-hot} settings, it even surpasses the AM+PM baseline which uses the full dataset. These results on ViT-L/14 confirm our method's superiority, data efficiency, and scalability.

\begin{table*}[t!]
\centering
\small
\begin{tabular*}{\textwidth}{@{\extracolsep{\fill}}l|cccccccc|c}
\toprule
\textbf{Method} & \textbf{SUN397} & \textbf{Cars} & \textbf{RESISC45} & \textbf{EuroSAT} & \textbf{SVHN} & \textbf{GTSRB} & \textbf{MNIST} & \textbf{DTD} & \textbf{Average} \\
\midrule
TA~\citep{DBLP:conf/iclr/IlharcoRWSHF23} & 55.2 & 54.9 & 66.7 & 78.9 & 80.2 & 69.7 & 97.3 & 50.4 & 69.1 \\
TA + Ours (equal) & 65.1 & 58.9 & 80.4 & 90.6 & 86.6 & 90.4 & 98.3 & 58.5 & 78.6 \\
TA + Ours (priority) & 67.6 & 64.4 & 87.0 & 95.2 & 89.0 & 95.2 & 98.7 & 69.6 & 83.3 \\
TA + Ours (onehot) & 68.4 & 67.1 & 91.0 & 98.0 & 90.6 & 97.6 & 99.2 & 76.3 & 86.0 \\
\midrule
TIES~\citep{DBLP:conf/nips/YadavTCRB23} & 59.5 & 60.0 & 71.7 & 78.2 & 86.3 & 72.9 & 98.2 & 52.8 & 72.4 \\
TIES + Ours (equal) & 70.4 & 64.5 & 84.1 & 91.3 & 85.6 & 90.7 & 97.8 & 63.1 & 80.9 \\
TIES + Ours (priority) & 72.6 & 68.6 & 89.6 & 95.5 & 87.8 & 95.6 & 98.4 & 72.3 & 85.1 \\
TIES + Ours (onehot) & 73.0 & 71.8 & 92.3 & 98.2 & 89.6 & 97.7 & 99.1 & 78.4 & 87.5 \\
\bottomrule
\end{tabular*}
\caption{Test accuracies (\%) on eight datasets when applying our method to simpler merging backbones. Our post-hoc correction significantly improves the performance of both Task Arithmetic (TA) and TIES-Merging (TIES) under all preference settings. This demonstrates the generality of our framework.}
\label{tab:appendix_other_backbones}
\end{table*}

\begin{figure*}[t!]
    \centering
    \includegraphics[width=0.24\textwidth]{figures/sun397_cars_scatter.pdf}
    \includegraphics[width=0.24\textwidth]{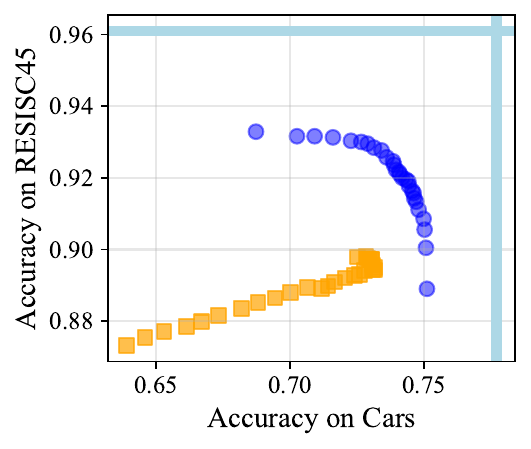}
    \includegraphics[width=0.24\textwidth]{figures/resisc45_eurosat_scatter.pdf}
    \includegraphics[width=0.24\textwidth]{figures/eurosat_svhn_scatter.pdf}
    \includegraphics[width=0.24\textwidth]{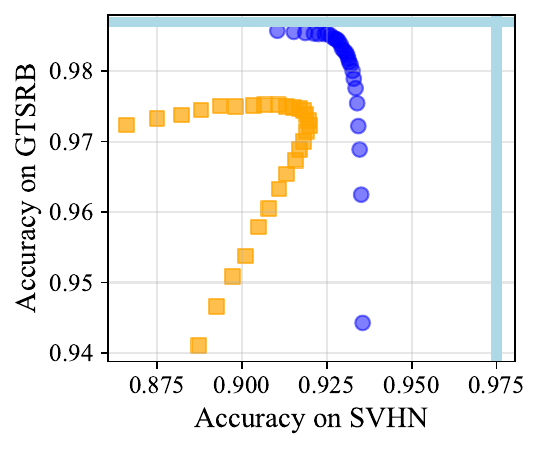}
    \includegraphics[width=0.24\textwidth]{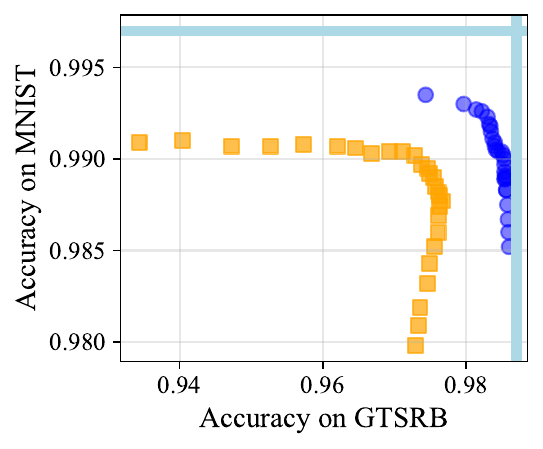}
    \includegraphics[width=0.24\textwidth]{figures/mnist_dtd_scatter.pdf}
    \includegraphics[width=0.24\textwidth]{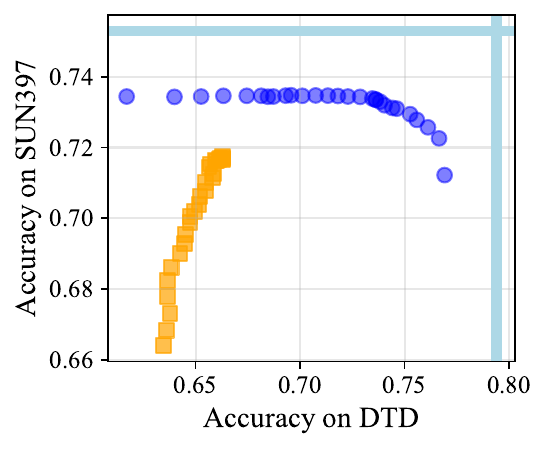}
    \caption{Two task performance under uniformly varying local preferences obtained by AMPP+Ours and AMPP+PM when merging eight ViT-B/32 models. Accuracies on selected task pairs are shown in each subplot.}
    \label{fig:appendix_2task_of_8task}
\end{figure*}

\subsection{Results for Other Pre-merged Models}
To validate the plug-and-play nature of our proposed representation correction framework, we demonstrate its effectiveness as a post-hoc enhancement on various pre-merged models beyond the AdaMerging (AM) and AdaMerging++ (AMPP) backbones used in the main text. Table~\ref{tab:appendix_other_backbones} presents the results of applying our method to simpler merging techniques: Task Arithmetic (TA)~\citep{DBLP:conf/iclr/IlharcoRWSHF23} and TIES-Merging (TIES)~\citep{DBLP:conf/nips/YadavTCRB23}. The results are shown for all preference settings across all eight tasks. Our method consistently provides a significant performance uplift over the base models, showcasing its broad applicability and utility as a general-purpose merging enhancer.

\begin{table*}[t!]
\centering
\begin{tabular*}{\textwidth}{@{\extracolsep{\fill}}l|cccccccc|c}
\toprule
\textbf{Method} & \textbf{SUN397} & \textbf{Cars} & \textbf{RESISC45} & \textbf{EuroSAT} & \textbf{SVHN} & \textbf{GTSRB} & \textbf{MNIST} & \textbf{DTD} & \textbf{Average} \\
\midrule
\multicolumn{10}{l}{\textit{Equal Preference}} \\
\midrule
Naive & 66.1 & \textbf{71.1} & 85.2 & 95.0 & 89.1 & 96.5 & 98.6 & \textbf{68.3} & 83.7 \\
Polar & 67.0 & 70.8 & 85.3 & 91.4 & 88.3 & 95.2 & 98.4 & 68.2 & 83.1 \\
\textbf{Ours} & \textbf{71.0} & 70.1 & \textbf{88.2} & \textbf{95.4} & \textbf{90.9} & \textbf{97.1} & \textbf{98.6} & 68.0 & \textbf{84.9} \\
\midrule
\multicolumn{10}{l}{\textit{Priority Preference (SUN397)}} \\
\midrule
Naive & 71.7 & \textbf{70.2}& 81.7 & 92.5 & 88.6 & 95.5 & 98.4 & \textbf{64.4}& 82.9 \\
Polar & 71.7 & \textbf{70.2}& 81.8 & 89.8 & 87.9 & 94.6 & 98.2 & 64.1 & 82.3 \\
\textbf{Ours} & \textbf{72.5}& 68.6 & \textbf{85.0}& \textbf{93.4}& \textbf{90.2}& \textbf{96.0}& \textbf{98.5}& 63.5 & \textbf{83.5} \\
\midrule
\multicolumn{10}{l}{\textit{Priority Preference (Cars)}} \\
\midrule
Naive & 63.8 & \textbf{75.7}& 83.4 & 92.9 & 87.9 & 94.1 & 98.1 & 62.6 & 82.3 \\
Polar & 65.3 & 74.4 & 83.5 & 91.5 & 87.2 & 93.8 & 98.0 & 62.4 & 82.0 \\
\textbf{Ours} & \textbf{70.4}& 72.8 & \textbf{87.7}& \textbf{95.0}& \textbf{90.6}& \textbf{96.9}& \textbf{98.6}& \textbf{67.0}& \textbf{84.9} \\
\midrule
\multicolumn{10}{l}{\textit{Priority Preference (RESISC45)}} \\
\midrule
Naive & 62.7 & 68.6 & 91.5 & 91.9 & 88.9 & 94.1 & 98.4 & 63.4 & 82.5 \\
Polar & 63.5 & 68.3 & \textbf{92.1}& 88.6 & 87.9 & 91.0 & 98.1 & 63.0 & 81.6 \\
\textbf{Ours} & \textbf{69.5}& \textbf{69.5}& 91.2 & \textbf{93.8}& \textbf{90.6}& \textbf{96.7}& \textbf{98.6}& \textbf{66.3}& \textbf{84.5} \\
\midrule
\multicolumn{10}{l}{\textit{Priority Preference (EuroSAT)}} \\
\midrule
Naive & 58.9 & 66.0 & 73.7 & 98.4 & 89.4 & 94.4 & \textbf{98.6}& 63.9 & 80.4 \\
Polar & 54.4 & 57.4 & 68.7 & \textbf{98.5} & 88.4 & 85.3 & 98.1 & 56.6 & 75.9 \\
\textbf{Ours} & \textbf{70.6}& \textbf{69.7}& \textbf{87.0}& 97.1 & \textbf{90.8}& \textbf{96.9}& \textbf{98.6}& \textbf{66.8}& \textbf{84.7} \\
\midrule
\multicolumn{10}{l}{\textit{Priority Preference (SVHN)}} \\
\midrule
Naive & 56.4 & 56.6 & 80.8 & 91.7 & 92.1 & 93.7 & 98.4 & 62.3 & 79.0 \\
Polar & 53.2 & 52.7 & 76.2 & 86.1 & \textbf{92.8}& 86.2 & 98.2 & 57.7 & 75.4 \\
\textbf{Ours} & \textbf{69.7}& \textbf{68.5}& \textbf{87.5}& \textbf{94.9}& 92.3 & \textbf{95.7}& \textbf{98.5}& \textbf{65.9}& \textbf{84.1} \\
\midrule
\multicolumn{10}{l}{\textit{Priority Preference (GTSRB)}} \\
\midrule
Naive & 56.4 & 60.1 & 80.2 & 91.3 & 86.7 & \textbf{98.4}& 97.8 & 59.6 & 78.8 \\
Polar & 54.7 & 57.4 & 78.8 & 86.0 & 84.2 & \textbf{98.4}& 96.2 & 57.3 & 76.6 \\
\textbf{Ours} & \textbf{69.8}& \textbf{69.0}& \textbf{87.7}& \textbf{94.7}& \textbf{90.4}& 98.1 & \textbf{98.5}& \textbf{66.4}& \textbf{84.3} \\
\midrule
\multicolumn{10}{l}{\textit{Priority Preference (MNIST)}} \\
\midrule
Naive & 57.4 & 60.7 & 82.6 & 92.1 & 88.1 & 93.1 & 99.1 & 61.8 & 79.3 \\
Polar & 49.8 & 55.4 & 78.1 & 87.9 & 87.7 & 87.7 & \textbf{99.2}& 55.1 & 75.1 \\
\textbf{Ours} & \textbf{70.6}& \textbf{69.5}& \textbf{88.0}& \textbf{95.4}& \textbf{90.5}& \textbf{96.7}& 98.8 & \textbf{67.1}& \textbf{84.6} \\
\midrule
\multicolumn{10}{l}{\textit{Priority Preference (DTD)}} \\
\midrule
Naive & 62.3 & 66.2 & 83.5 & 91.8 & 88.4 & 94.0 & 98.2 & 76.8 & 82.6 \\
Polar & 62.3 & 65.8 & 83.4 & 85.8 & 87.5 & 92.9 & 97.7 & \textbf{77.2}& 81.6 \\
\textbf{Ours} & \textbf{70.5}& \textbf{69.8}& \textbf{87.9}& \textbf{95.0}& \textbf{90.7}& \textbf{96.9}& \textbf{98.6}& 73.7 & \textbf{85.4} \\
\bottomrule
\end{tabular*}
\caption{Full test accuracies (\%) for different aggregation strategies on the AM backbone. We compare our proposed method against the Naive and Polar baselines under equal and priority preference settings. Our method consistently achieves a better balance, leading to a higher average performance.}
\label{tab:appendix_aggregation_full}
\end{table*}

\subsection{Additional Performance on Selected Task Pairs}
Figure~\ref{fig:appendix_2task_of_8task} presents two task performance under uniformly varying local preferences obtained by AMPP+Ours and AMPP+PM when merging eight ViT-B/32 models. The results demonstrate that our method achieves superior Pareto frontiers compared to PM on the shown task pairs. Notably, all response scatters exhibit smooth transitions corresponding to uniformly varying preferences between the two target tasks, while the other six tasks are disabled (preference weight = 0), whereas PM fails to produce controllable responses for critical task pairs (SUN397-Cars, Cars-RESISC45, MNIST-DTD, and DTD-SUN397), highlighting our method's enhanced controllability.

\begin{figure*}[t!]
    \centering
\includegraphics[width=0.35\textwidth]{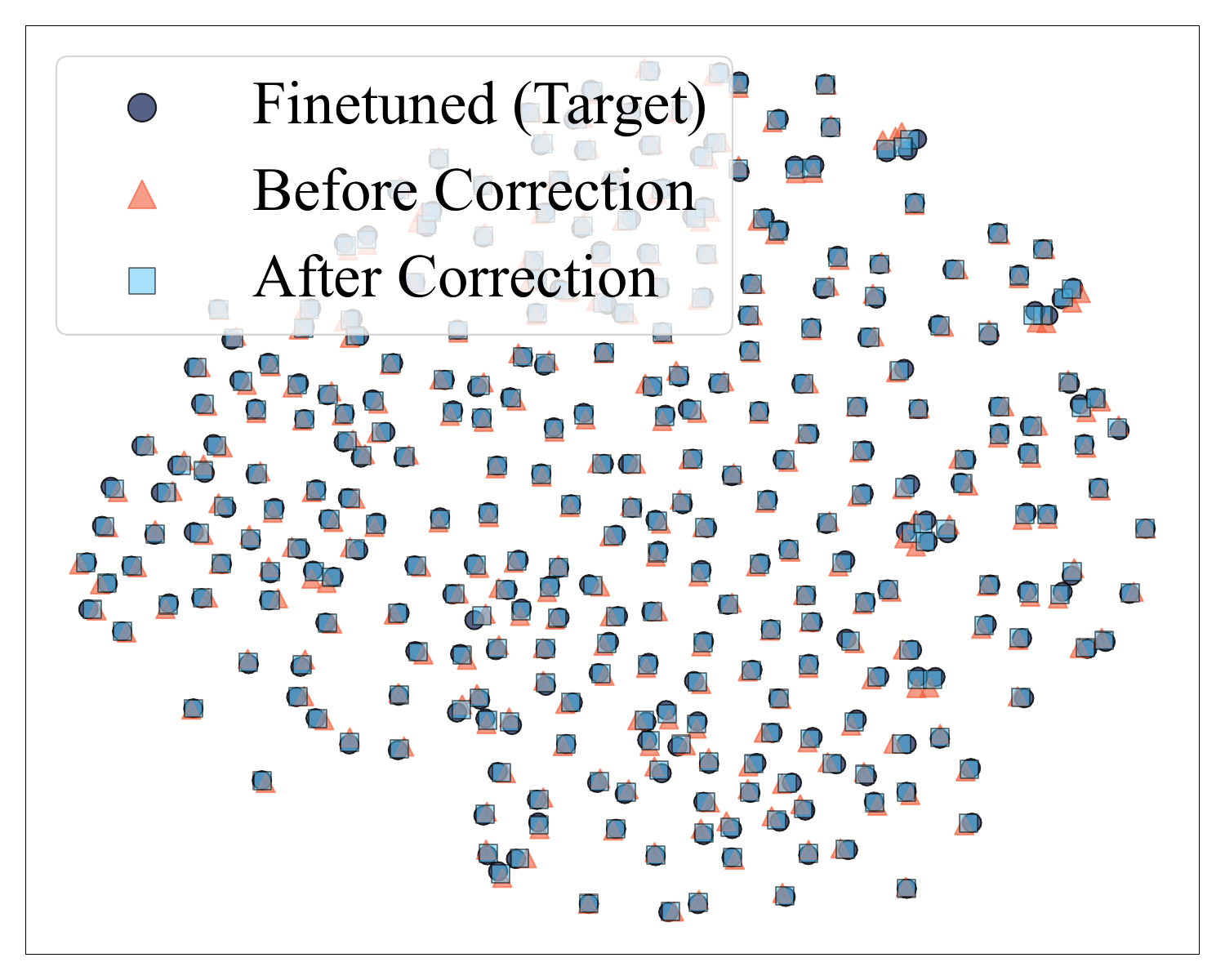}
    \includegraphics[width=0.35\textwidth]{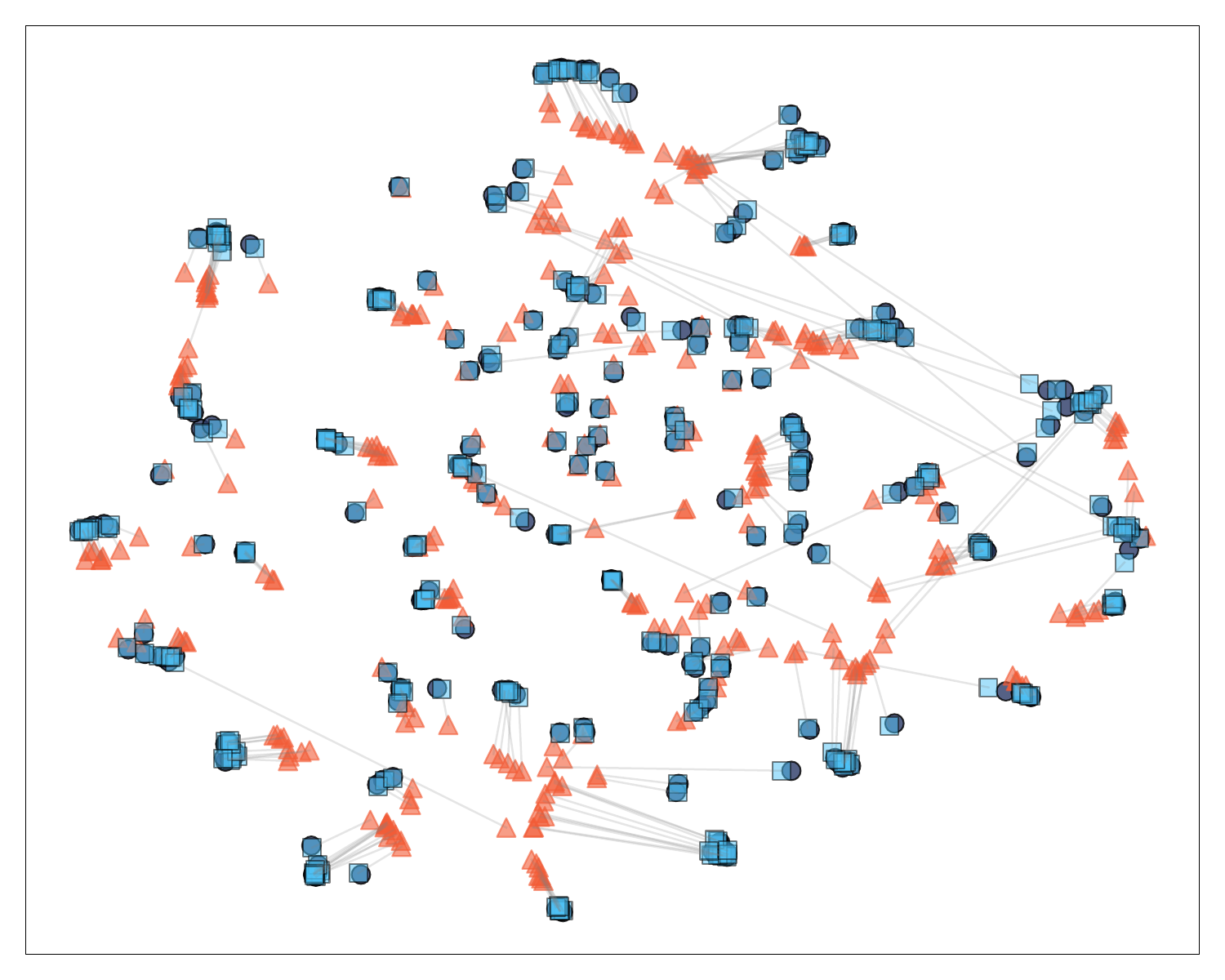}
    \includegraphics[width=0.35\textwidth]{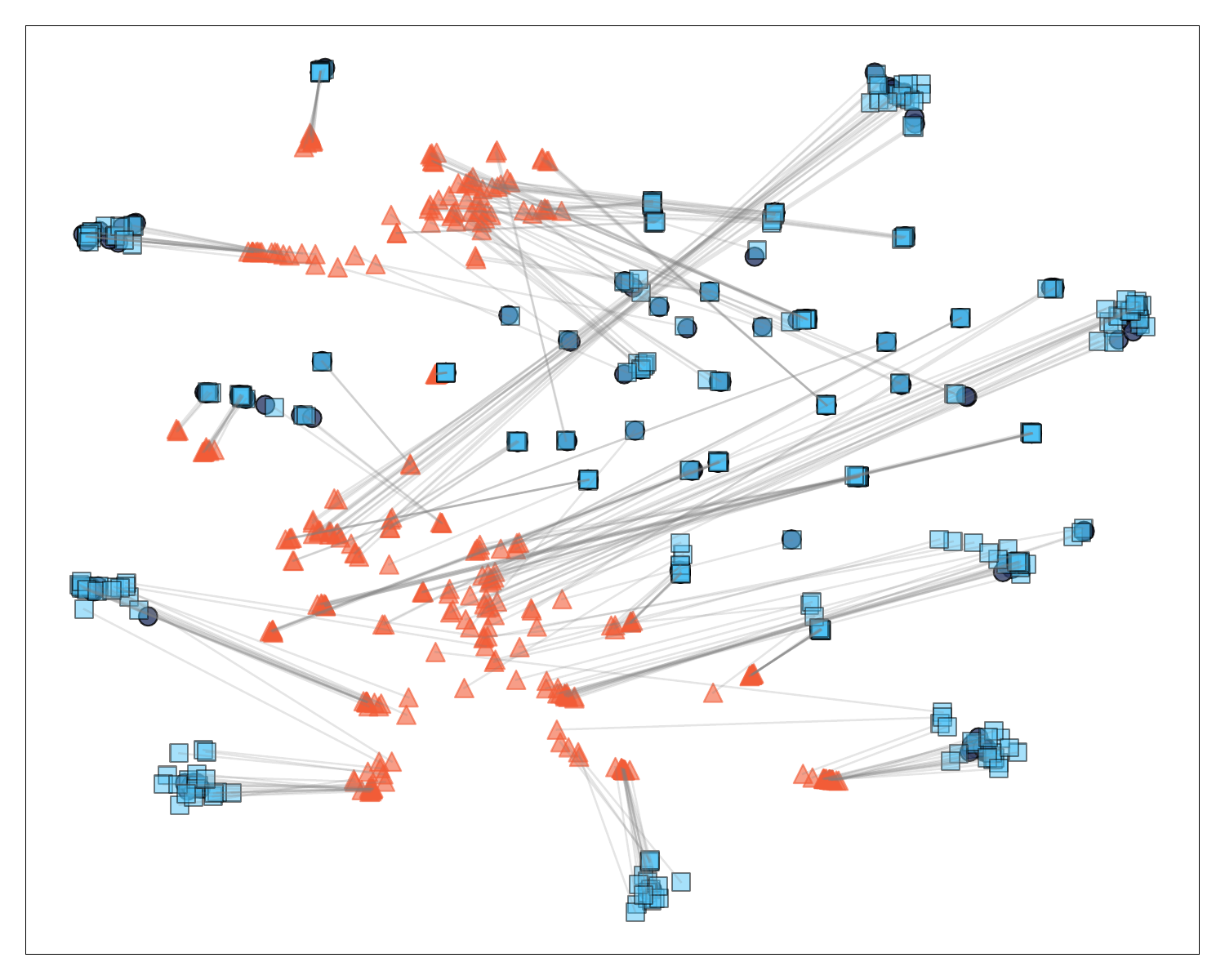}
    \includegraphics[width=0.35\textwidth]{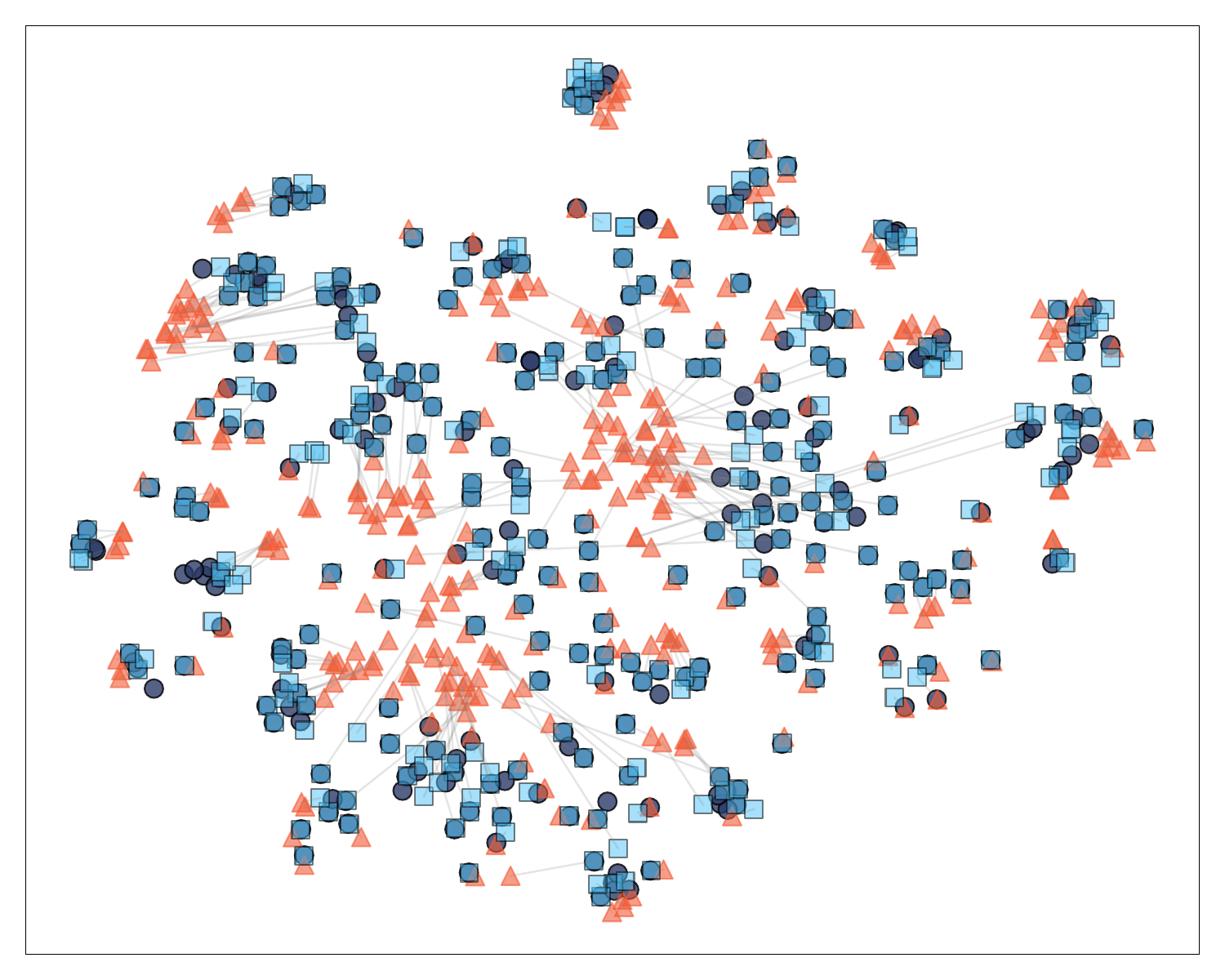}
    \caption{We visualize the 2D t-SNE embeddings of representations for four additional tasks: SUN397, RESISC45, GTSRB, and DTD (from top left to bottom right). For each task, we show the representations from the pre-merged model ($\mathcal{Z}^\text{mtl}$, blue), the ideal single-task model ($\mathcal{Z}^\text{ind}$, orange), and our corrected model ($\hat{\mathcal{Z}}^\text{mtl}$, green). Across all tasks, our method successfully aligns the distorted merged representations with the ideal ones via a single linear transformation, corroborating our core hypothesis.}
    \label{fig:appendix_tsne}
\end{figure*}

\subsection{Full Results for Aggregation Baselines}
To supplement the analysis of aggregation strategies, we provide the full, absolute accuracy results for the aggregation baselines discussed. In the main paper (Table~\ref{tab:weighting_comparison}), we used normalized metrics to compare our principled Pareto-optimal aggregation against the \textit{Naive} and \textit{Polar} heuristics. Table~\ref{tab:appendix_aggregation_full} below presents the raw test accuracies on all eight tasks for these methods under both equal and priority preference settings. These detailed results allow for a granular inspection of how each strategy trades off performance across tasks and substantiates our main claim that our proposed aggregation yields a more globally robust solution.

\subsection{Additional Visualizations}
Figure~\ref{fig:appendix_tsne} extends the t-SNE visualization from the main paper to four additional datasets: SUN397, RESISC45, GTSRB, and DTD. In each case, the representation of the pre-merged model ($\mathcal{Z}^\text{mtl}$) exhibits a distributional misalignment compared to the ideal single-task representation ($\mathcal{Z}^\text{ind}$). Our simple linear correction effectively aligns the merged representation with the ideal one, visually confirming that a linear map is sufficient to resolve the primary discrepancy across a diverse set of tasks.

\section{Limitations and Future Work} 
\label{sec:appendix_limitations}

Our method assumes that the changes in model representations caused by merging can be corrected with a linear transformation. This works well in our experiments, but may not hold when the changes are highly non-linear. Additionally, our approach leverages linear scalarization to derive its closed-form solution. While highly efficient, this inherently means that if the true Pareto front of final task performance exhibits concave regions, our method may not be able to identify solutions within those regions. This is a known theoretical limitation of linear scalarization in multi-objective optimization, representing a trade-off for our method's exceptional speed and analytical nature. Our method also needs an unlabeled dataset from the target task to learn the correction. In the future, it would be useful to build correction methods that do not require any target data, by only analyzing the model weights.

So far, we have tested our method on vision models like ViT. We did not test it on larger language models or models that handle both images and text. These models may have more complex representation structures, and merging them could create new challenges. Extending our approach to such models is an important direction for future work.
 
\end{document}